\newtheorem{theorem}{Theorem}
\title{MPD-SGR: Robust Spiking Neural Networks with Membrane Potential Distribution-Driven Surrogate Gradient Regularization}
\author{
    Runhao Jiang\textsuperscript{\rm 1, \rm 2},
    Chengzhi Jiang\textsuperscript{\rm 1, \rm 2},
    Rui Yan\textsuperscript{\rm 3},
    Huajin Tang\textsuperscript{\rm 1, \rm 2, \rm 4}\thanks{Author for correspondence}
}
\begin{document}

\maketitle

\begin{abstract}
    The surrogate gradient (SG) method has shown significant promise in enhancing the performance of deep spiking neural networks (SNNs), but it also introduces vulnerabilities to adversarial attacks. Although spike coding strategies and neural dynamics parameters have been extensively studied for their impact on robustness, the critical role of gradient magnitude, which reflects the model's sensitivity to input perturbations, remains underexplored. In SNNs, the gradient magnitude is primarily determined by the interaction between the membrane potential distribution (MPD) and the SG function. In this study, we investigate the relationship between the MPD and SG and their implications for improving the robustness of SNNs. Our theoretical analysis reveals that reducing the proportion of membrane potentials lying within the gradient-available range of the SG function effectively mitigates the sensitivity of SNNs to input perturbations. Building upon this insight, we propose a novel MPD-driven surrogate gradient regularization (MPD-SGR) method, which enhances robustness by explicitly regularizing the MPD based on its interaction with the SG function. Extensive experiments across multiple image classification benchmarks and diverse network architectures confirm that the MPD-SGR method significantly enhances the resilience of SNNs to adversarial perturbations and exhibits strong generalizability across diverse network configurations, SG functions, and spike encoding schemes.
\end{abstract}


\section{Introduction}
Robustness is an intrinsic characteristic of the human brain, a trait that traditional artificial neural networks (ANNs) fail to replicate. Unlike ANNs that rely on dense embeddings~\cite{wang2022learngene,wang2024vision,anonymous2024clusterlearngene}, Spiking neural networks (SNNs)~\cite{maass1997networks} emulate the structural and functional properties of the brain by biological neurons that encode information via binary spikes, thereby demonstrating enhanced resilience to noise and perturbations~\cite{marchisio2020spiking,sharmin2019comprehensive,hu2024toward}. Although SNNs currently underperform ANNs in terms of accuracy, their intrinsic robustness advantage makes them particularly suitable for high-security applications like autonomous driving~\cite{zhu2024autonomous} and privacy computing~\cite{Kim_Venkatesha_Panda_2022}.

The development of surrogate gradient (SG) methods has significantly advanced the performance of deep SNNs~\cite{tavanaei2019deep}, yet this progress introduces vulnerabilities to gradient-based adversarial attacks~\cite{liang2021exploring}. Recent studies have shown that, although SNNs inherently exhibit greater robustness than ANNs, such adversarial perturbations can critically compromise their accuracy~\cite{ding2022snn}. Enhancing the adversarial robustness of SNNs remains a pivotal challenge in the research community. Some techniques originally developed for ANNs, such as adversarial training~\cite{dingrobust} and Lipschitz regularization~\cite{ding2022snn}, have been extended to SNNs. In parallel, growing attention has been paid to understanding how structural parameters (e.g., leak factors~\cite{chowdhury2021towards,sharmin2020inherent,xu2024feel}, thresholds~\cite{el2021securing}) and neural encoding schemes~\cite{ma2023exploiting,ding2024enhancing,wu2024rsc} affect the robustness of SNNs. The intrinsic noise-filtering properties of spiking dynamics, along with the stochastic nature of neural coding, are considered critical factors underpinning the robustness of SNNs.

However, existing research~\cite{xu2024feel,liu2024enhancing} has demonstrated that a model's sensitivity to input perturbations (i.e., robustness error) can be reflected in the magnitude of gradients, which is primarily governed by the interaction between the membrane potential distribution (MPD) and the SG function in SNNs. The MPD not only reflects the response characteristics of the neuronal population but also significantly influences the gradient propagation across network layers. Recent studies~\cite{guo2022reducing,guo2022recdis,guo2023rmp} have further shown that constraining the MPD to remain within the gradient-available region of the SG function with an appropriate proportion facilitates more effective gradient-based optimization. Therefore, it is also crucial to investigate how to constrain the MPD according to the SG function to enhance the robustness of SNNs.

In this paper, we present a theoretical and empirical investigation into the role of neuronal MPD and SG mechanisms in enhancing the adversarial robustness of SNNs. First, through rigorous gradient analysis, we establish a theoretical framework that formally connects robustness error with the SG in SNNs, demonstrating that reducing the SG magnitude can effectively mitigate robustness error. Since the SG magnitude is primarily influenced by the membrane potential, we further investigate the MPD and establish an explicit connection between the MPD and network parameters. Thus, the MPD can be optimized by learning the network parameters. Lastly, based on the relationship between the MPD and the SG function, we derive an approximate expression for the SG magnitude. Based on these theoretical insights, we design an MPD-driven surrogate gradient regularization (MPD-SGR) method that reduces the proportion of MPD within the gradient-available region of the SG function, thereby enhancing the network's resilience against adversarial attacks (Figure~\ref{fg:mpd-sgr}). The proposed framework demonstrates remarkable adaptability across diverse neuron parameter configurations, SG function variants, and spike coding methods. Experimental results across multiple image classification datasets, network architectures, and gradient approximation methods demonstrate that our MPD-SGR method effectively improves the robustness of SNNs against various input perturbations.

\section{Related Work}
\subsection{Surrogate Gradient Learning}
Surrogate gradient (SG) methods provide approximate gradients for the non-differentiable spiking neurons, thus supporting the backpropagation of errors in both spatial and temporal domains~\cite{wu2018spatio,gu2019stca,neftci2019surrogate}. Recent studies have found that the relationship between the gradient-available interval of the SG function and the membrane potential dynamics has a significant impact on the backpropagation training of SNNs~\cite{zenke2021remarkable}, inspiring a series of works to study the alignment between membrane potential and gradient-available interval. Some researchers adopt a fixed SG function and constrain the membrane potential distribution to ensure that the gradient is propagated in an appropriate proportion. InfLoR-SNN~\cite{guo2022reducing} designed a membrane potential rectifier to redistribute the membrane potential closer to the spiking threshold. RecDis-SNN~\cite{guo2022recdis} introduced three regularization losses to penalize three undesired shifts of the membrane potential distribution. Optimizing the SG function is another appealing approach. Dspike~\cite{li2021differentiable} adaptively changed its shape and captured the direction of finite difference gradients to find the optimal shape and smoothness for gradient estimation. LSG~\cite{lian2023learnable} adapted the width of the SG function automatically during training based on the membrane potential dynamics. ASGL~\cite{wang2023adaptive} learned the precise gradients of the loss landscape in SNNs adaptively by fusing the learnable relaxation degree into a prototype network with random spike noise.

\subsection{Adversarial Defense for SNN}
SNNs exhibit inherent robustness~\cite{marchisio2020spiking,sharmin2019comprehensive}, a property that is often attributed to their biologically plausible components, such as neural coding and dynamics. Several studies aim to enhance robustness from a neurodynamic perspective. For example, structural parameters, including time windows, leak factors, and thresholds, play a key role in determining the adversarial robustness of SNNs~\cite{el2021securing}. The noise-filtering effect of membrane potential leakage characteristics has motivated approaches to improve the SNN robustness by evolving leaky factor~\cite{chowdhury2021towards,sharmin2020inherent}. Additionally, leveraging stochasticity or attention mechanisms in neural coding can mitigate noise in the information transmission process. For instance, Poisson coding, which encodes the continuous intensity of an image into binary spikes with inherent randomness, has been shown to exhibit greater robustness than direct coding~\cite{sharmin2020inherent,kim2022rate}. To further utilize the noisy, non-deterministic nature of neural coding, NDL~\cite{ma2023exploiting} incorporates noisy neuronal dynamics into SNNs to investigate their potential robustness benefits, while StoG~\cite{ding2024enhancing} introduces additional stochastic gating for spiking neurons to enhance model robustness. The FEEL method~\cite{xu2024feel} adopts a frequency encoding inspired by selective visual attention mechanisms, suppressing noise in different frequency ranges at different time steps. Moreover, there are also some SNN-specific defensive techniques that draw inspiration from ANNs. A typical representative is adversarial training (AT)~\cite{kundu2021hire}, which enhances robustness by incorporating adversarial examples generated through attacks into the training process. RAT~\cite{ding2022snn} proposes a regularized training strategy based on Lipschitz analysis of SNNs.

\begin{figure*}[t]
    \centering
    \includegraphics[width=0.7\textwidth, height=0.4\textwidth]{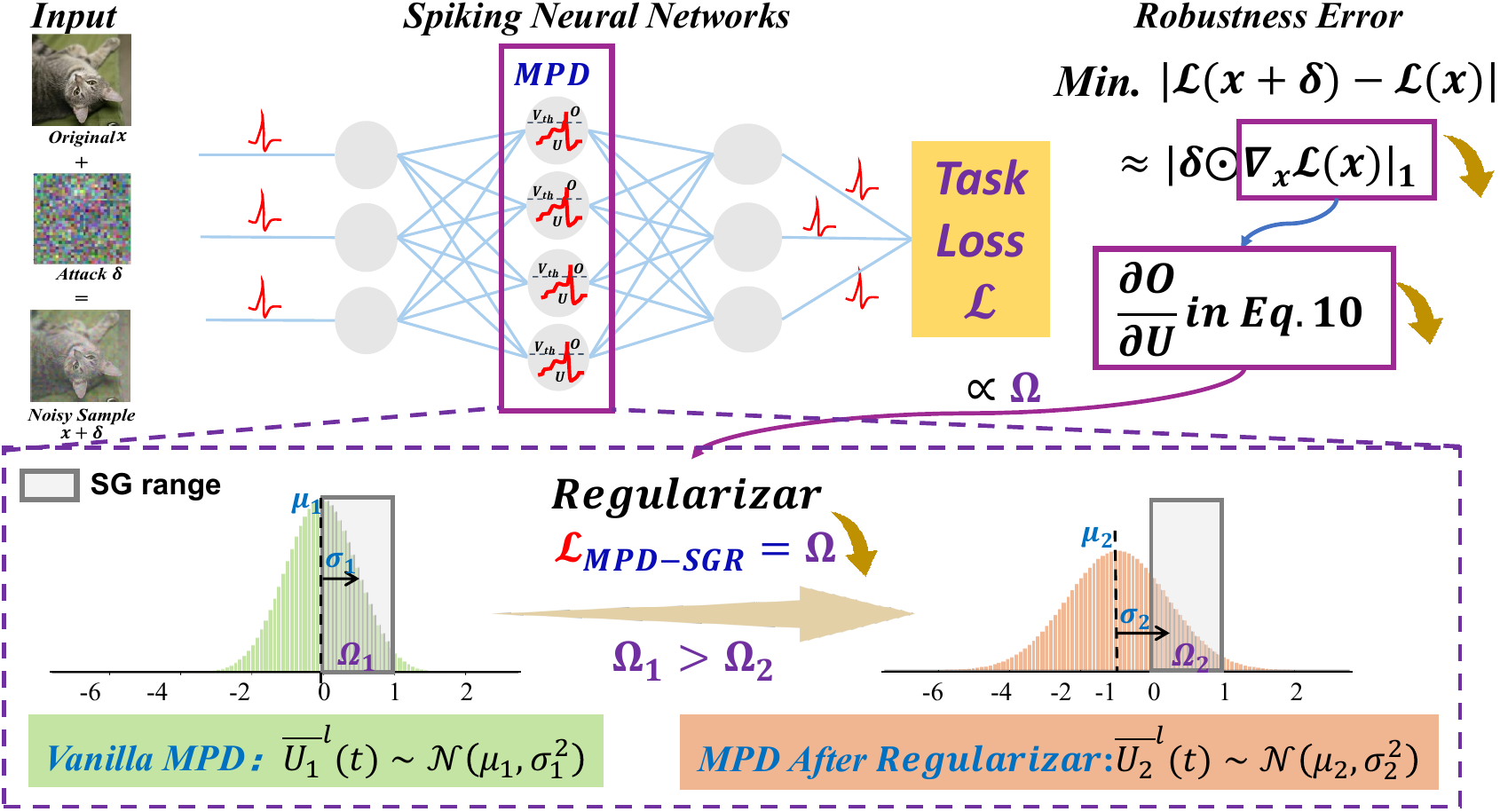}
    \caption{The overall framework of MPD-SGR. The MPD-SGR constrains membrane potential distribution (MPD) of SNNs (mean $\mu$ and standard deviation $\sigma$) to reduce the overlap $\Omega$ between MPD and the gradient-available range of the SG function (gray area). This regularization minimizes output error under adversarial perturbations.}
    \label{fg:mpd-sgr}
\end{figure*}

\section{Preliminaries}
\subsection{Spiking Neural Networks}
In the field of deep SNNs, the most commonly used spiking neuron is the iterative leaky integrate-and-fire (LIF) model. The iterative LIF model simulates biological neurons through three key dynamic processes: synaptic integration, membrane potential accumulation, and neuronal firing with resetting, which are mathematically defined as follows:
\begin{align}
     & I^{l-1}(t) = W^l \cdot O^{l-1}(t),                                    \\
     & R^l(t-1) = O^{l}(t-1) r^{l}(t-1),                                     \\
     & U^{l}(t) = \tau (U^l(t-1)- R^l(t-1)) +I^{l-1}(t), \label{eq:membrane} \\
     & O^{l}(t) = \mathcal{H}(\overline{U}^{l}(t)-v_{th}), \label{eq:firing}
\end{align}
where $l$ and $t$ denote the layer index and the time step of neural activity, respectively. The postsynaptic current $I^{l-1}(t)$ is the sum of spike signals $O^{l-1}(t)$ weighted by the synaptic weights $W^l$. $R^l(t-1)$ is the reset term, where the triggering spike $O^{l}(t-1)$ causes the membrane potential reset and $r^{l}(t)$ is the subtracted value due to reset. Under the hard reset condition, $r^{l}(t) = U^{l}(t)$, whereas under the soft reset condition, $r^{l}(t) = v_{th}$. $U^{l}(t)$ denotes the membrane potential state, receiving the postsynaptic current and decaying with a leaky factor $\tau$. When $U^{l}(t)$ exceeds the threshold $v_{th}$, a spike is generated according to the Heaviside function $\mathcal{H}(\cdot)$.

\subsection{Adversarial Attacks for SNN}
Neural networks are notorious for their vulnerability to subtle perturbations in the input data, known as adversarial attacks. Adversarial attacks generate a perturbation $\delta$ by maximizing the network's task loss $\mathcal{L}$ and then applying it to input data $x$, resulting in adversarial examples. Formally, this maximization problem can be expressed as:
\begin{align}
    \delta = \arg\max \mathcal{L}(f(x+\delta; W), y)  \quad  s.t. \delta \in R(x, \epsilon)
\end{align}
where $y$ is the target label, $f$ is the network parameterized by its weights $W$, and $R(x, \epsilon)$ is the $\ell_p$-constrained neighborhood centered on $x$ with radius $\epsilon$, ensuring the perturbation remains imperceptible.

Researchers have observed that adversarial attacks can also be applied to deep SNNs using SG~\cite{bu2023rate}, where the SG function provides approximate gradients for non-differentiable Heaviside step function. In the SG method, the backpropagation of the loss is derived by unfolding the dynamics of LIF neurons as follows:
\begin{align}
    \frac{\partial\mathcal{L}}{\partial\boldsymbol{U}^l(t)}=\frac{\partial\mathcal{L}}{\partial\boldsymbol{O}^{l}(t)}\frac{\partial\boldsymbol{O}^{l}(t)}{\partial\boldsymbol{U}^{l}(t)}
    +\frac{\partial\mathcal{L}}{\partial\boldsymbol{U}^l(t+1)}\frac{\partial\boldsymbol{U}^l(t+1)}{\partial\boldsymbol{U}^l(t)}
\end{align}
where $\mathcal{L}$ is the task loss. SG methods approximate the non-differentiable term $\frac{\partial\boldsymbol{O}^{l}(t)}{\partial\boldsymbol{U}^{l}(t)}$ with an SG function. Among the various SG functions demonstrated to be effective~\cite{neftci2019surrogate}, we employ the triangle SG function defined as:
\begin{equation} \label{eq:sg}
    \frac{\partial\boldsymbol{O}^{l}(t)}{\partial\boldsymbol{U}^{l}(t)} \approx h\left(\boldsymbol{\overline{U}}^{l}(t)\right) = \frac{1}{\gamma^2}\max\left(\gamma-\left|\boldsymbol{\overline{U}}^{l}(t)\right|,0\right)
\end{equation}
where $\boldsymbol{\overline{U}}^{l}(t) = \boldsymbol{U}^{l}(t) - v_{th}$ and $\gamma$ is a hyperparameter that controls the smoothness and gradient-available interval of the surrogate function.

\section{Methods}
\subsection{Analysis of robustness error}
The effect of a perturbation is quantified as the difference in error values before and after perturbation (i.e., robustness error), $\mathcal{L}(x + \delta) - \mathcal{L}(x)$. This perturbation-induced error difference is theoretically analyzed using local linearity techniques~\cite{qin2019adversarial}, yielding the following expression:
\begin{eqnarray}
    \left|\mathcal{L}\left(x+\delta\right)-\mathcal{L}\left(x\right)\right|\leqslant
    \left|\delta\odot\nabla_{x}\mathcal{L}\left(x\right)\right|_{1}+g\left(\delta,x\right)
\end{eqnarray}
where $g\left(\delta,x\right)$ is the residual term and $\nabla_{x}\mathcal{L}\left(x\right)$ is the input gradient. This theorem reveals a relationship between variations in robustness error and the $L_1$ norm of the input gradient. A natural approach to reduce these variations is to regularize the input gradient. However, this requires double backpropagation to compute its gradient with respect to model parameters, which is computationally expensive and impractical for large-scale SNNs.

To address this, we leverage LIF dynamics and BPTT to reformulate input-gradient optimization as internal network-gradient optimization~\cite{xu2024feel}. Specifically, the constraint term for SNNs can be rewritten as $\sum_{t} |\delta_t\odot\nabla_{O}\mathcal{L}(O_t)|_{1}$, which can be further derived as follows:
\begin{equation} \label{eq:constraint}
    \min \sum_t \left| \delta_t \odot \frac{\partial \mathcal{L}}{\partial O_t} \right|_1 = \min \sum_t \left| \frac{1}{L} \sum_{l=1}^L \left( P_1 \cdot P_2 \cdot P_3 \right) \frac{\partial \mathcal{L}}{O_l^T}  \right|_1
\end{equation}

Equation~\ref{eq:constraint} outlines the key components influencing robustness, including the perturbation term $P_1 = \prod_{k=t}^T \delta(t) \odot \tau_l^k$, the model weight term $P_2 = \prod_{q=2}^l W_{q-1, q}$ and the SG term $P_3 = \prod_{v=1}^l \frac{\partial O_v^t}{\partial U_v^t}$. These constraints offer a principled way to regularize SNNs for robustness, explaining why recent methods such as weight regularization~\cite{ding2022snn} and evolving leak factors~\cite{xu2024feel} are effective. However, the potential of SG terms for improving robustness has been largely overlooked, which is the motivation of this work. Beyond the gradient-based analysis, we also derive an upper bound on robustness sensitivity from the perspective of neuron firing states (Appendix~A).

\subsection{Analysis of Membrane Potential Distribution}
Equation~\ref{eq:constraint} shows that regularizing the SG terms suppresses fluctuations in robustness error. As defined in Equation~\ref{eq:sg}, the SG magnitude depends on the SG function and the membrane potential ($\overline{U}$). Therefore, analyzing the membrane potential distribution (MPD) is a prerequisite for understanding how SG affects SNN robustness. In forward propagation, the MPD is constrained by the threshold-dependent batch normalization (tdBN)~\cite{zheng2021going} and the dynamics of LIF neurons. tdBN first transforms the postsynaptic current $I$ into a Gaussian-distributed variable $\overline{I}$ using the scaling factors ($\alpha$ and $V_{th}$) and a learnable affine transformation ($\lambda$ and $\beta$). The dynamics of LIF neurons reveal that the MPD is governed by the decay factor ($\tau$), leading to a shift and scaling relative to the normalized postsynaptic current $\overline{I}$. Based on these observations, we propose Theorem 1 to explain the detailed dynamic distribution of membrane potential.

\begin{theorem}
    In an iterative LIF model with decay factor $\tau$ over $T$ timesteps, the tdBN-normalized postsynaptic input follows the distribution $\overline{I} \sim \mathcal{N}(\beta_{c}, (\lambda_{c}\alpha V_{th})^2)$. For $t=1, 2,3,...,T$, the membrane potentials is distributed as $\overline{U}_{c}^{l}(t) \sim \mathcal{N}(\beta_c D(\tau, t) - S(t) , (\lambda_{c}\alpha V_{th})^2 D(\tau^2, t))$, where $c$ is the channel number of the tdBN layer, $D(\tau, t) = \sum_{i=1}^{t} \tau^{t-i}$ is the cumulative decay function, and $S(t)$ is the cumulative response strength constant.
\end{theorem}

\begin{proof}
    The proof of Theorem 1 is included in Appendix~A.
\end{proof}

\begin{table*}[t]
    \centering
    \begin{tabular}{ccccccccc}
        \hline
        \multicolumn{1}{c|}{Methods}                       & \multicolumn{4}{c|}{CIFAR-10} & \multicolumn{4}{c}{CIFAR-100}                                                                                                                            \\
        \multicolumn{1}{c|}{}                              & Clean                         & FGSM                          & PGD            & \multicolumn{1}{c|}{BIM}            & Clean          & FGSM           & PGD            & BIM            \\ \hline
        \multicolumn{9}{c}{\textbf{Vanilla Training}}                                                                                                                                                                                                 \\ \hline
        \multicolumn{1}{c|}{REG~\cite{ding2022snn}}        & \textbf{92.49}                & 25.18                         & 0.88           & \multicolumn{1}{c|}{0.60}           & \textbf{72.82} & 10.14          & 0.27           & 0.31           \\
        \multicolumn{1}{c|}{StoG~\cite{ding2024enhancing}} & 91.64                         & 16.22                         & 0.28           & \multicolumn{1}{c|}{0.12}           & 72.22          & 5.92           & 0.26           & 0.20           \\
        \multicolumn{1}{c|}{DLIF~\cite{dingrobust}}        & 92.01                         & 11.52                         & 0.08           & \multicolumn{1}{c|}{0.06}           & 71.38          & 7.20           & 0.08           & 0.08           \\
        \multicolumn{1}{c|}{FEEL~\cite{xu2024feel}}        & 90.08                         & 29.17                         & 6.67           & \multicolumn{1}{c|}{5.99}           & 70.06          & 9.74           & 2.06           & 1.92           \\
        \multicolumn{1}{c|}{SR~\cite{liu2024enhancing}}    & 91.04                         & 31.72                         & 8.55           & \multicolumn{1}{c|}{7.28}           & 66.76          & 16.16          & 8.00           & 6.43           \\
        \hline
        \multicolumn{1}{c|}{Ours}                          & 91.63                         & \textbf{47.59}                & \textbf{20.55} & \multicolumn{1}{c|}{\textbf{16.85}} & 70.42          & \textbf{34.51} & \textbf{9.03}  & \textbf{8.41}  \\
        \multicolumn{1}{c|}{Improvement}                   & -0.86                         & +15.87                        & +12            & \multicolumn{1}{c|}{+9.57}          & -2.4           & +18.35         & +1.03          & +1.98          \\ \hline
        \multicolumn{9}{c}{\textbf{Adversarial Training}}                                                                                                                                                                                             \\ \hline
        \multicolumn{1}{c|}{RAT~\cite{ding2022snn}}        & \textbf{91.41}                & 45.00                         & 22.95          & \multicolumn{1}{c|}{20.80}          & 69.43          & 19.07          & 9.23           & 8.41           \\
        \multicolumn{1}{c|}{StoG~\cite{ding2024enhancing}} & 90.13                         & 45.75                         & 27.74          & \multicolumn{1}{c|}{26.32}          & 69.24          & 19.64          & 9.77           & 3.23           \\
        \multicolumn{1}{c|}{DLIF~\cite{dingrobust}}        & 88.94                         & 39.21                         & 27.17          & \multicolumn{1}{c|}{25.98}          & 67.08          & 19.34          & 9.96           & 9.39           \\
        \multicolumn{1}{c|}{FEEL~\cite{xu2024feel}}        & 89.00                         & 45.62                         & 29.52          & \multicolumn{1}{c|}{28.39}          & 68.05          & 19.55          & 12.11          & 11.97          \\
        \multicolumn{1}{c|}{SR~\cite{liu2024enhancing}}    & 88.26                         & 44.28                         & 28.63          & \multicolumn{1}{c|}{27.03}          & 61.26          & 23.10          & 17.07          & 16.28          \\
        \hline
        \multicolumn{1}{c|}{Ours}                          & 90.69                         & \textbf{59.27}                & \textbf{33.38} & \multicolumn{1}{c|}{\textbf{32.61}} & \textbf{69.56} & \textbf{39.45} & \textbf{22.23} & \textbf{19.45} \\
        \multicolumn{1}{c|}{Improvement}                   & -0.72                         & +13.52                        & +3.86          & \multicolumn{1}{c|}{+4.22}          & +0.13          & +16.35         & +5.16          & +3.17          \\ \hline
    \end{tabular}
    \caption{Compare with state-of-the-art work on adversarial robustness of SNN (VGG11, $T$ = 8)} \label{tb:sota}
\end{table*}

\begin{table*}[t]
    \setlength{\tabcolsep}{1.5mm}
    \begin{center}
        \begin{sc}
            \begin{tabular}{ccccccc}
                \toprule
                \textbf{Dataset}          & \textbf{Model}  & \textbf{Clean}                        & \textbf{FGSM}        & \textbf{PGD}         & \textbf{BIM}         & \textbf{CW}          \\
                \midrule
                \multirow{8}{*}{CIFAR10}  & VGG11, BPTT     & \multirow{2}{*}{\textbf{92.45}/90.62} & 23.86/\textbf{42.12} & 0.87/\textbf{17.18}  & 0.60/\textbf{14.59}  & 6.26/\textbf{18.85}  \\
                                          & VGG11, BPTR     &                                       & 27.92/\textbf{47.61} & 7.82/\textbf{25.52}  & 6.61/\textbf{25.00}  & 24.38/\textbf{38.07} \\
                                          & VGG11, BPTT, AT & \multirow{2}{*}{\textbf{91.08}/90.34} & 43.80/\textbf{58.71} & 20.63/\textbf{28.94} & 18.63/\textbf{24.74} & 29.87/\textbf{34.38} \\
                                          & VGG11, BPTR, AT &                                       & 49.11/\textbf{66.17} & 35.74/\textbf{43.12} & 34.59/\textbf{41.17} & 58.73/\textbf{65.05} \\
                                          & WRN16, BPTT     & \multirow{2}{*}{\textbf{93.46}/92.22} & 18.81/\textbf{45.82} & 0.02/\textbf{8.44}   & 0.03/\textbf{6.59}   & 3.61/\textbf{18.66}  \\
                                          & WRN16, BPTR     &                                       & 15.28/\textbf{38.18} & 0.17/\textbf{8.03}   & 0.16/\textbf{6.65}   & 11.76/\textbf{31.21} \\
                                          & WRN16, BPTT, AT & \multirow{2}{*}{91.15/\textbf{91.34}} & 42.31/\textbf{63.32} & 19.93/\textbf{38.11} & 18.03/\textbf{33.13} & 29.60/\textbf{43.06} \\
                                          & WRN16, BPTR, AT &                                       & 51.75/\textbf{70.93} & 34.44/\textbf{46.91} & 33.00/\textbf{44.79} & 58.90/\textbf{70.10} \\
                \hline
                \multirow{8}{*}{CIFAR100} & VGG11, BPTT     & \multirow{2}{*}{\textbf{72.54}/69.56} & 8.95/\textbf{20.16}  & 0.24/\textbf{6.90}   & 0.20/\textbf{5.51}   & 5.62/\textbf{13.54}  \\
                                          & VGG11, BPTR     &                                       & 11.45/\textbf{34.49} & 4.66/\textbf{10.07}  & 4.44/\textbf{9.94}   & 25.03/\textbf{26.99} \\
                                          & VGG11, BPTT, AT & \multirow{2}{*}{\textbf{68.60}/67.67} & 20.28/\textbf{36.18} & 9.97/\textbf{18.23}  & 8.82/\textbf{15.70}  & 15.46/\textbf{22.33} \\
                                          & VGG11, BPTR, AT &                                       & 29.78/\textbf{40.42} & 22.14/\textbf{26.38} & 21.27/\textbf{25.88} & 40.96/\textbf{47.44} \\

                                          & WRN16, BPTT     & \multirow{2}{*}{\textbf{73.86}/72.61} & 10.99/\textbf{22.58} & 0.04/\textbf{1.97}   & 0.05/\textbf{1.77}   & 4.93/\textbf{10.34}  \\
                                          & WRN16, BPTR     &                                       & 9.74/\textbf{21.60}  & 0.44/\textbf{5.52}   & 0.35/\textbf{5.50}   & 14.81/\textbf{24.13} \\
                                          & WRN16, BPTT, AT & \multirow{2}{*}{69.29/\textbf{69.67}} & 25.35/\textbf{39.47} & 12.09/\textbf{21.71} & 11.30/\textbf{18.87} & 20.17/\textbf{27.55} \\
                                          & WRN16, BPTR, AT &                                       & 31.99/\textbf{44.23} & 22.58/\textbf{35.67} & 22.04/\textbf{35.08} & 43.23/\textbf{53.21} \\
                \bottomrule
            \end{tabular}
        \end{sc}
    \end{center}
    \caption{The classification accuracy (Vanilla/Ours) under white-box attacks across multiple datasets and architectures ($T=4$).} \label{tb:wb}
\end{table*}

\subsection{The relationship between MPD and SG}
In SG methods, the SG function determines the gradient-available interval of the membrane potential and provides an approximate gradient. The membrane potential distribution is modeled as a Gaussian distribution according to Theorem 1, where gradient information is primarily carried by the membrane potential within the overlap between this distribution and the gradient-available interval of the SG function. A smaller overlap obstructs gradient propagation, while an excessively large overlap introduces numerous inaccurate approximate gradients, increasing the deviation from the true gradients (See Appendix~B for a detailed explanation). To optimize SNNs more effectively, much of the current work focuses on identifying the optimal overlap areas by adjusting the MPD~\cite{guo2022reducing,guo2022recdis} or the gradient-available interval~\cite{lian2023learnable,wang2023adaptive}. Their key objective is to ensure that the proportion of membrane potentials with gradients is appropriate, allowing for stable gradient propagation and reliable training. However, the SG not only plays a pivotal role in network training but also significantly impacts the robustness error induced by perturbations ($P_3$ term in Equation~\ref{eq:constraint}). Therefore, to achieve reliable model output, the SG magnitude needs to be appropriately reduced while avoiding interference with network training, which motivates us to impose thoughtful constraints on the SG magnitude to ensure training and robust performance.

The SG magnitude is determined by the overlap area, so we first derive the expression for the overlap area to represent the SG magnitude. We assume that the SG function satisfies Equation~\ref{eq:sg}, with its gradient-available interval defined as $[-\gamma, \gamma]$. The MPD follows a Gaussian distribution $\overline{U}^{l}(t) \sim \mathcal{N}(\mu, \sigma^2)$, with the probability density function (PDF) given by $p(x) = \frac{1}{\sigma\sqrt{2\pi}}\exp \left(-\frac{(x-\mu)^2}{2\sigma^2}\right)$. The overlap area is defined as the integral of the Gaussian distribution over the interval $[-\gamma, \gamma]$:
\begin{eqnarray}
    \Omega =\int_{-\gamma}^{\gamma}p(x)dx = \frac{1}{\sigma\sqrt{2\pi}}\int_{-\gamma}^{\gamma}\exp \left(-\frac{(x-\mu)^2}{2\sigma^2}\right)dx
\end{eqnarray}

To simplify the integral, we perform a change of variables. Let $z = \frac{x-\mu}{\sigma}$, so that $dx = \sigma dz$. Under this substitution, we obtain:
\begin{eqnarray}
    \Omega =\int_{\frac{\mu-\gamma}{\sigma}}^{\frac{\mu+\gamma}{\sigma}} \frac{1}{\sqrt{2\pi}} \exp \left(-\frac{z^2}{2}\right)dz
\end{eqnarray}

The integral can be evaluated using the cumulative distribution function (CDF)  of the standard normal distribution:
\begin{eqnarray} \label{eqn:cdf}
    \Phi(x) = \frac{1}{2} \left[ 1+ \operatorname{erf}(\frac{x}{\sqrt{2}})\right]
\end{eqnarray}
where $\operatorname{erf}(x) = \frac{2}{\sqrt{\pi}} \int_{0}^{x}  e ^{-t^2}dt$ is the error function. Using this CDF, the target overlap $\Omega$ can be written as:
\begin{eqnarray} \label{eqn:overlap}
    \Omega = \Phi\left(\frac{\mu+\gamma}{\sigma}\right) - \Phi\left(\frac{\mu-\gamma}{\sigma}\right)
\end{eqnarray}

When the SG function parameters $\gamma$ is fixed, the value of $\Omega$ is controlled by the mean $\mu$ and standard deviation $\sigma$ of membrane potential. The gradient of $\Omega$ with respect to the parameters of the membrane potential distribution is computed as follows:
\begin{align}
     & \frac{\partial\Omega}{\partial\mu}=\frac{1}{\sigma}\left[p\left(\frac{\mu^+}{\sigma}\right)-p\left(\frac{\mu^-}{\sigma}\right)\right]                 \\
     & \frac{\partial\Omega}{\partial\sigma}=\frac{1}{\sigma^2}\left[-\mu^+p\left(\frac{\mu^+}{\sigma}\right)+\mu^-p\left(\frac{\mu^-}{\sigma}\right)\right]
\end{align}
where $\mu^+ = \mu+\gamma$ and $\mu^- = \mu-\gamma$ are the upper and lower bounds of the overlap area, respectively.

Therefore, Equation~\ref{eqn:overlap} can serve as a SG regularizer, reducing the SG magnitude by constraining the parameters of the MPD (Figure~\ref{fg:mpd-sgr}). In practice, except for the last linear output layer, the SG is penalized at every timestamp in every channel for each layer in the training phase. Specifically, $\mathcal{L}_{MPD-SGR}$ for the b-th batch of input is given by:
\begin{align}
     & \mathcal{L}_{MPD-SGR}^b = \frac{1}{LCT}\sum_{l,c,t} \Omega_{c}^{b,l}(t)                                                                                                      \\
     & = \frac{1}{LCT}\sum_{l,c,t} \left[\Phi\left(\frac{\mu_{c}^{l}(t)+\gamma}{\sigma_{c}^{l}(t)}\right) - \Phi\left(\frac{\mu_{c}^{l}(t)-\gamma}{\sigma_{c}^{l}(t)}\right)\right]
\end{align}
where $L$ is the number of layers, $C$ is the number of channels, and $T$ is the number of time step. $\Omega_{c}^{l}(t)$ is the overlap area of the membrane potential distribution at the channel $c$ of the layer $l$ at time $t$. Finally, taking classification loss into consideration, the total loss can be written as:
\begin{eqnarray} \label{eqn:loss}
    \mathcal{L}^b = \mathcal{L}_{task}^b + \eta \mathcal{L}_{MPD-SGR}^b
\end{eqnarray}
where $\mathcal{L}_{task}^b$ denotes the task loss (e.g., cross-entropy), and $\eta$ is a coefficient that modulates the strength of the regularization. The training evolution of the MPD and the SG overlap ($\Omega$), as well as an analysis of $\eta$ are provided in Appendix~C.

\begin{figure}[t]
    \centering
    \includegraphics[width=0.47\textwidth]{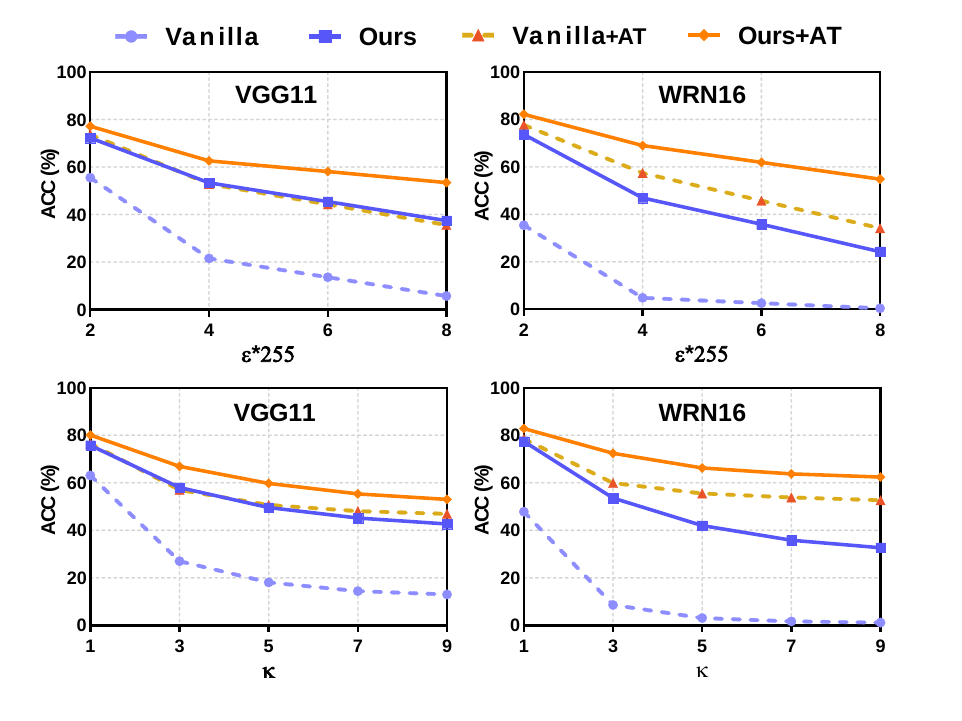}
    \caption{Performance of the white-box PGD attack with increasing perturbation $\epsilon$ and iterative step $k$ = 4 (Top Panels), increasing iterative step $k$ and $\epsilon$ = 8/255 (Bottom Panels).}
    \label{fg:wb-plus}
\end{figure}

\section{Experiments}
\subsection{Experimental settings}
We conduct experiments to evaluate our proposed method on image classification tasks using the CIFAR-10, CIFAR-100 and Tiny-ImageNet~\cite{le2015tiny} datasets. The SNN models are VGG (VGG11)~\cite{simonyan2014very} and WideResNet (WRN16)~\cite{zagoruyko2016wide}. The LIF neuron parameters and the SG function settings are consistent with previous work~\cite{ding2022snn}.

We evaluate model robustness under three attack scenarios: white-box, black-box and non-gradient attacks. The adversarial examples are generated by four attack methods using differentiable approximation techniques (BPTT or BPTR~\cite{bu2023rate}): FGSM~\cite{goodfellow2014explaining}, PGD~\cite{madry2017towards}, BIM~\cite{kurakin2018adversarial} and CW~\cite{carlini2017towards}. The attack perturbation strength is set to $\epsilon= 8/255$, the iterative steps $k= 7$ and step size $\alpha= 0.01$ for PGD and BIM. For adversarial training (AT), models are trained with white-box PGD adversarial examples ($k$ = 2, $\epsilon$ = 2/255). The bold values in all tables represent the optimal results for each setting. Full details on experimental settings and attack algorithms are provided in Appendix~D.

\begin{table}[t]
    \centering
    \begin{tabular}{cccc}
        \hline
        Methods & Clean & Gaussian Noise          & Uniform Noise          \\ \hline
        REG     & 72.82 & 24.73 (-48.09)          & 49.86 (-23.86)         \\
        FEEL    & 70.06 & 32.63 (-37.43)          & 52.47 (-17.59)         \\
        SR      & 66.76 & 47.47 (-19.29)          & 61.37 (\textbf{-5.39}) \\ \hline
        Ours    & 70.42 & \textbf{53.01 (-17.41)} & \textbf{64.21} (-6.21) \\ \hline
    \end{tabular}
    \caption{Comparative evaluation of defense mechanisms under random perturbation attacks on CIFAR100 datasets with VGG11 ($T$ = 8). Values in parentheses denote accuracy degradation relative to clean performance.} \label{tb:atk_nograd}
\end{table}

\begin{figure}[t]
    \centering
    \includegraphics[width=0.47\textwidth]{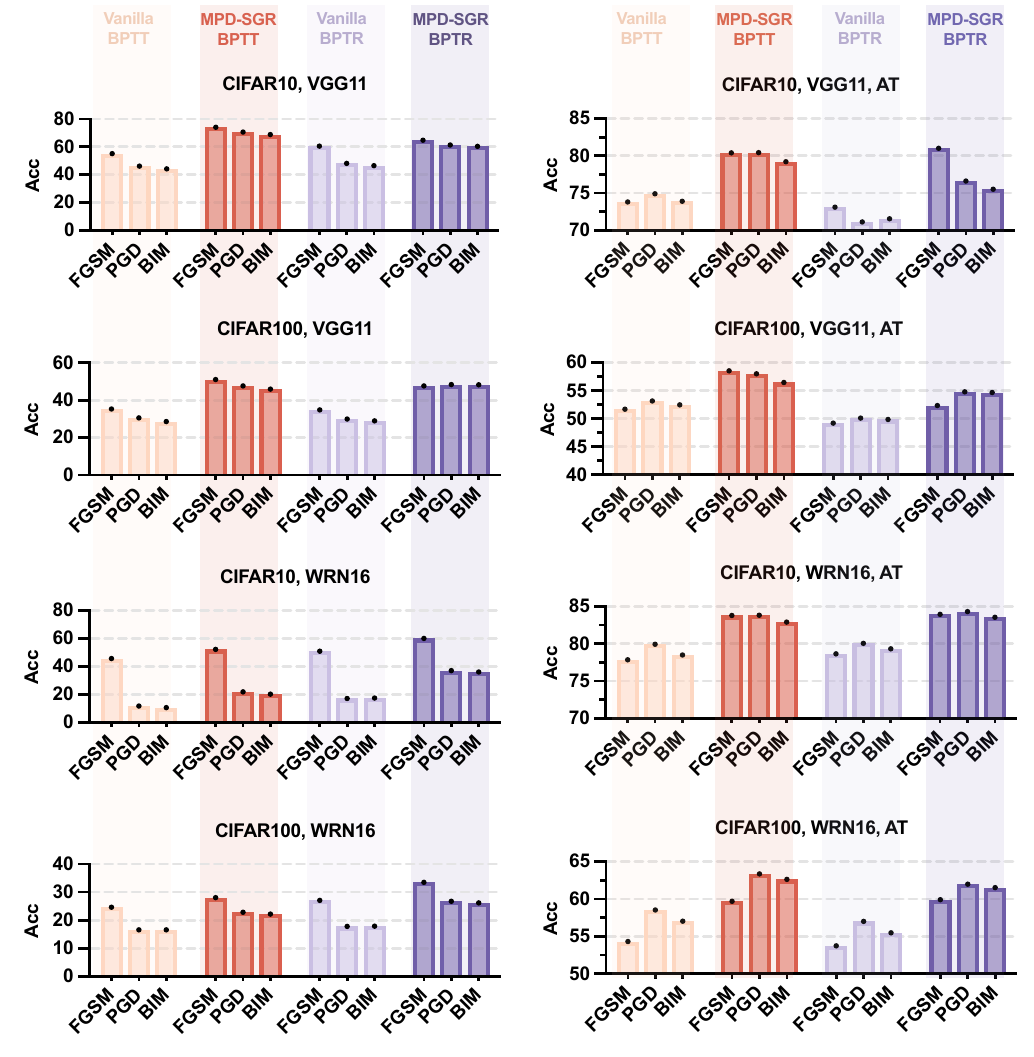}
    \caption{Performance of the proposed MPD-SGR method under different black-box attacks.}
    \label{fg:bb}
\end{figure}

\subsection{Performance for various attack types}
\subsubsection{Comparison with state-of-the-art work.}
We evaluate MPD-SGR against state-of-the-art (SOTA) adversarial defense methods for SNNs including REG~\cite{ding2022snn}, StoG~\cite{ding2024enhancing}, DLIF~\cite{dingrobust}, SR~\cite{liu2024enhancing} and FEEL~\cite{xu2024feel}, under both vanilla and adversarial training (AT) settings. As shown in Table~\ref{tb:sota}, MPD-SGR demonstrates superior robustness across various attacks and datasets, outperforming all SOTA methods. Under vanilla training, it significantly surpasses the SOTA baseline (SR) by 15.87\%, 12.00\%, and 9.57\% on CIFAR-10, and by 18.35\%, 1.03\% (PGD), and 1.98\% (BIM) on CIFAR-100 under FGSM, PGD, and BIM attacks, respectively. With AT, MPD-SGR maintains superior robustness, exceeding the SOTA baseline (FEEL) by 13.52\%, 3.86\%, and 4.22\% on CIFAR-10, and the SOTA baseline (SR) by 16.35\%, 5.16\%, and 3.17\% on CIFAR-100 for FGSM, PGD, and BIM attacks, respectively.

Current adversarial defense mechanisms enhance model robustness against perturbations, yet often incur a degradation in clean accuracy—a trade-off particularly pronounced in AT frameworks and consistent with findings in prior literature~\cite{wu2024rsc}. While SR achieves relatively strong robustness under AT, its clean accuracy declines markedly to 61.26\% on CIFAR-100 and 88.26\% on CIFAR-10, indicating a significant compromise in nominal performance. In contrast, MPD-SGR achieves a more favorable balance, demonstrating its practical advantage.

\subsubsection{White-box attack.}
We compare the classification accuracy of MPD-SGR with the vanilla method (REG/RAT) under various white-box attacks across VGG11/WRN16 networks and CIFAR-10/100 datasets. As summarized in Table~\ref{tb:wb}, the MPD-SGR method significantly enhances the robustness of the vanilla SNN model against all tested attacks, with consistent gains observed under both BPTT and BPTR methods across all datasets and architectures. Notably, without AT, the vanilla SNN completely fails under strong iterative attacks such as PGD and BIM, with accuracy dropping close to 0\%. In contrast, the MPD-SGR method maintains a certain level of performance, achieving an accuracy of approximately 10\%.

We evaluate how model accuracy degrades under increasing PGD attack intensity $\epsilon$ and number of iterations $k$ on the CIFAR-10 dataset. The results in Figure~\ref{fg:wb-plus} show that the MPD-SGR exhibits a slower accuracy decline than the baseline under stronger PGD attacks for both VGG and WRN architectures, regardless of whether AT is used. This demonstrates that our method exhibits strong tolerance to more intense adversarial perturbations. For additional results about white-box attacks, including varying $T$ and different attack settings, please refer to Appendix~E.

\subsubsection{Black-box attack.}
We train a substitute SNN model to generate adversarial examples that are transferable to both the vanilla and MPD-SGR models for black-box attacks. This substitute SNN is trained without any regularization on the same dataset, using identical neuron parameters and architecture as the target model. Figure~\ref{fg:bb} highlights the performance of the MPD-SGR method across various black-box attacks. We observe that black-box attacks are less effective than white-box attacks, and the MPD-SGR method also achieves comparable robustness improvements under black-box attacks as in the white-box setting, suggesting its robustness stems not from gradient obfuscation but from inherent properties of the method. For detailed gradient obfuscation analysis, please refer to the Appendix~F.

\begin{table}[t]
    \setlength{\tabcolsep}{1.5mm}
    \centering
    \begin{tabular}{cccccc}
        \hline
        Model+SG                               & Methods & Clean          & FSGM           & PGD            & BIM            \\ \hline
        \multirow{2}{*}{\centering VGG11+Rec}  & REG     & \textbf{91.85} & 24.00          & 3.13           & 2.33           \\
                                               & Ours    & 91.23          & \textbf{43.28} & \textbf{15.82} & \textbf{14.2}  \\
        \multirow{2}{*}{\centering VGG11+Sig}  & REG     & \textbf{92.15} & 19.42          & 0.24           & 0.15           \\
                                               & Ours    & 89.38          & \textbf{37.25} & \textbf{9.26}  & \textbf{7.23}  \\
        \multirow{2}{*}{\centering VGG11+Sup}  & REG     & \textbf{86.82} & 21.39          & 0.82           & 0.50           \\
                                               & Ours    & 84.45          & \textbf{43.42} & \textbf{6.32}  & \textbf{4.50}  \\ \hline
        \multirow{2}{*}{\centering WRN16+Rec}  & REG     & \textbf{92.94} & 24.13          & 0.08           & 0.09           \\
                                               & Ours    & 92.13          & \textbf{49.51} & \textbf{18.15} & \textbf{13.83} \\
        \multirow{2}{*}{\centering WRN16+Sig}  & REG     & \textbf{92.50} & 16.80          & 0.02           & 0.01           \\
                                               & Ours    & 91.44          & \textbf{36.30} & \textbf{8.77}  & \textbf{7.05}  \\
        \multirow{2}{*}{\centering  WRN16+Sup} & REG     & \textbf{86.57} & 11.54          & 0.05           & 0.05           \\
                                               & Ours    & 86.54          & \textbf{39.25} & \textbf{7.17}  & \textbf{4.74}  \\ \hline
    \end{tabular}
    \caption{Performance (\%) of the proposed MPD-SGR method under three different SG functions.}  \label{tb:sg_cpr}
\end{table}

\begin{table}[t]
    \centering
    \setlength{\tabcolsep}{1mm}

    \begin{tabular}{cccccc}
        \toprule
        Dataset & Model & Methods  & Clean          & FGSM           & PGD            \\
        \midrule
        \multirow{6}{*}{\makecell{Tiny-                                               \\ImageNet}} & VGG16          & DIR              & \textbf{57.90}          & 2.04          & 0.01         \\
                & VGG16 & DIR+Ours & 54.78          & \textbf{14.33} & \textbf{5.72}  \\
                & VGG16 & POS      & \textbf{48.14} & 6.79           & 2.68           \\
                & VGG16 & POS+Ours & 47.83          & \textbf{20.42} & \textbf{8.21}  \\
                & VGG16 & RSC      & \textbf{47.47} & 22.63          & 13.75          \\
                & VGG16 & RSC+Ours & 46.98          & \textbf{35.06} & \textbf{17.60} \\
        \bottomrule
    \end{tabular}
    \caption{Performance (\%) of the proposed MPD-SGR method under three different spike coding methods.} \label{tab:atk_coding}
\end{table}

\subsubsection{Non-gradient attack.}
We further evaluate the impact of MPD-SGR on the robustness of SNNs under non-gradient attacks, specifically two types of random perturbations: Gaussian ($\delta \sim \mathcal{N}(0, \epsilon)$) and Uniform ($\delta \sim \mathcal{U}(-\epsilon, \epsilon)$) noise. All results are reported using a fixed noise intensity of $\epsilon=0.1$. As shown in Table~\ref{tb:atk_nograd}, our method maintains high classification performance under both perturbation types, significantly outperforming the REG baseline, which suffers severe performance degradation (accuracy drops of 48.09\% and 23.86\%). Although SR exhibits a degree of robustness, its substantially lower clean accuracy (66.76\%) fundamentally constrains its performance under noisy conditions, resulting in absolute accuracies below those achieved by our approach. These results indicate that MPD-SGR exhibits strong robustness to non-gradient perturbations, showcasing its versatility across diverse input disturbances.

\subsection{Ablation study}
\subsubsection{Performance under different SG functions.}
We investigate the applicability of our method to other commonly used SG functions, including Rectangular (Rec)~\cite{wu2019direct}, Sigmoid (Sig)~\cite{roy2019scaling} and Superspike (Sup)~\cite{zenke2018superspike}. As presented in Table~\ref{tb:sg_cpr}, our proposed MPD-SGR method consistently enhances robustness across different SG functions, further confirming the versatility and effectiveness of our method.

\subsubsection{Performance under different spike coding.}
Considering the significant impact of input spike encoding methods on the robustness of SNNs, we also compare the performance of MPD-SGR with that of the vanilla SNN model under different spike coding methods, including Direct Coding (DIR), Poisson Coding (POS) and Randomized Smoothing Coding (RSC)~\cite{wu2024rsc}. The proposed MPD-SGR method exhibits consistently superior performance over the baseline SNN model across all tested spike coding schemes (Table~\ref{tab:atk_coding}), demonstrating the compatibility of our method with various spike coding approaches.

\section{Conclusion}
This paper presents a theoretical and empirical investigation into the role of neuronal membrane potential distribution (MPD) and surrogate gradient (SG) mechanisms in enhancing the adversarial robustness of deep spiking neural networks (SNNs). Through rigorous gradient analysis, we establish a theoretical framework that formally connects input perturbation sensitivity to the magnitude characteristics of SG in SNNs. Our analysis reveals that reducing the proportion of membrane potentials within the gradient-available range of the SG function effectively mitigates the sensitivity of SNNs to input perturbations. Based on these theoretical insights, we propose MPD-driven surrogate gradient regularization (MPD-SGR), which systematically constrains the MPD during training to optimize gradient flow, thereby enhancing the network's resilience against adversarial attacks. Extensive experiments across multiple benchmark datasets and SNN architectures show that MPD-SGR maintains comparable clean accuracy while consistently improving robustness against gradient-based attacks and random noise, outperforming existing defense methods. The proposed approach also demonstrates remarkable adaptability across diverse network configurations, SG function variants, and spike coding schemes. These results position MPD-SGR as a promising, generalizable defense strategy for SNNs under diverse perturbation scenarios.

\section{Acknowledgements}
This work was supported by the National Natural Science Foundation of China (Grant Nos. 62236007, 32441113, and 62276235). The authors would also like to acknowledge anonymous reviewers and chairs for providing insightful comments to help improve this work.

\bibliography{19424}

\end{document}


\maketitle


\appendix
\section{Proofs of Theorems} \label{app:proof}

\begin{theorem}
    In an iterative LIF model with decay factor $\tau$ over $T$ timesteps, the postsynaptic input to neurons processed through the tdBN layer is normalized to follow the distribution $\overline{I} \sim N(\beta_{c}, (\lambda_{c}\alpha V_{th})^2)$. For $t=1, 2,3,...,T$, the membrane potentials satisfy $\overline{U}_{c}^{l}(t) \sim N((\beta_c D(\tau, t) - S(t) , (\lambda_{c}\alpha V_{th})^2 D(\tau^2, t))$, where $c$ is the channel number of the tdBN layer, $D(\tau, t) = \sum_{i=1}^{t} \tau^{t-i}$ is the cumulative decay function and $S(t)$ is the cumulative response strength constant.
\end{theorem}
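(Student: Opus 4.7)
The plan is to unroll the iterative LIF recurrence and then apply the closure of Gaussian distributions under affine combinations. The standard iterative LIF update at channel $c$ has the form
\begin{equation*}
\overline{U}_{c}^{l}(t) = \tau\,\overline{U}_{c}^{l}(t-1) + \overline{I}_{c}^{l}(t) - V_{th}\,O_{c}^{l}(t-1),
\end{equation*}
with $\overline{U}_{c}^{l}(0)=0$. First I would solve this first-order linear recurrence by induction on $t$, obtaining
\begin{equation*}
\overline{U}_{c}^{l}(t) = \sum_{i=1}^{t}\tau^{t-i}\,\overline{I}_{c}^{l}(i) \;-\; V_{th}\sum_{i=1}^{t-1}\tau^{t-1-i}\,O_{c}^{l}(i),
\end{equation*}
so the membrane potential decomposes cleanly into a driven stochastic part and a spike-reset correction.

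Next I would invoke the hypothesis that the tdBN layer normalizes the post-synaptic input at every timestep to the same Gaussian $\overline{I}_{c}^{l}(i)\sim N(\beta_c,(\lambda_c\alpha V_{th})^{2})$, treated as independent across $i$. Because a finite linear combination of independent Gaussians is Gaussian, the first summation is itself Gaussian with mean $\beta_c\sum_{i=1}^{t}\tau^{t-i}=\beta_c D(\tau,t)$ and variance $(\lambda_c\alpha V_{th})^{2}\sum_{i=1}^{t}\tau^{2(t-i)}=(\lambda_c\alpha V_{th})^{2}D(\tau^{2},t)$, which matches the claimed moments up to the spike correction.

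I would then identify the reset term $S(t):=V_{th}\sum_{i=1}^{t-1}\tau^{t-1-i}\,O_{c}^{l}(i)$ with the ``cumulative response strength constant'' in the theorem, shifting only the mean by $-S(t)$ and leaving the variance unchanged. Combining with the previous step yields $\overline{U}_{c}^{l}(t)\sim N\!\bigl(\beta_c D(\tau,t)-S(t),\;(\lambda_c\alpha V_{th})^{2}D(\tau^{2},t)\bigr)$, and a brief base-case check at $t=1$ closes the induction.

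The main obstacle is conceptual rather than computational: the reset term $S(t)$ is not genuinely deterministic, since each $O_c^l(i)$ depends on whether $\overline{U}_c^l(i)$ crossed the firing threshold and is therefore correlated with the driving noise. I would address this by following the theorem's convention of absorbing the reset contribution into a constant ``response strength'' $S(t)$, noting that this decoupling is the standard mean-field style approximation used when analyzing tdBN, and flagging that independence of $\overline{I}_{c}^{l}(i)$ across timesteps is likewise taken as a modeling assumption inherited from the tdBN normalization.
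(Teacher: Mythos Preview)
Your approach is essentially the same as the paper's: unroll the LIF recurrence, split the result into a Gaussian input sum and a reset term, read off the mean and variance of the Gaussian part via closure under affine combinations, and invoke a mean-field approximation to treat the reset contribution as the deterministic constant $S(t)$. The only minor differences are that the paper uses the recurrence form $U(t)=\tau\bigl(U(t{-}1)-O(t{-}1)\,r(t{-}1)\bigr)+\overline{I}(t)$ (so the $\tau$-exponent in the reset sum is shifted by one relative to yours) and makes the mean-field step explicit by replacing each $O_c^l(i)$ with its expected firing rate $f^l(i)=\mathbb{P}[\overline{U}_c^l(i)>v_{th}]$, whereas you leave the random spikes in $S(t)$ and flag the approximation verbally; neither difference affects the claimed distribution.
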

\begin{proof}
    Assuming the postsynaptic current $I_{c}$ at $c$-th input channel satisfy Gaussian distribution. The tdBN \cite{zheng2021going} normalizes it as follows:
    \begin{align}
         & \hat{I}_{c} =\frac{\alpha V_{th}\left(I_{c}-\mathbb{E}\left[I_{c}\right]\right)}{\sqrt{\mathbb{VAR}\left[I_{c}\right]+\epsilon}} \label{eq:tdbn1} \\
         & \overline{I}_{c} =\lambda_{c} \hat{I}_{c}+\beta_{c}, \label{eq:tdbn2}
    \end{align}
    Eqs \ref{eq:tdbn1} eliminates the mean of the input and normalizes its variance to 1. Subsequently, the variance is scaled by two factors, $\alpha$ and $V_{th}$, resulting in $\hat{I}_{c}$ with a distribution of $N(0, (\alpha V_{th})^2)$. Eq. \ref{eq:tdbn2} introduce the learnable affine transformation parameters $\lambda_{c}$ and $\beta_{c}$, which shift and scale the normalized input current $\hat{I}_{c}$, producing $\overline{I}_{c}$ with a distribution of $N(\beta_{c}, (\lambda_{c}\alpha V_{th})^2)$.

    $\overline{I}_{c}$ is then fed to the iterative LIF neuron following dynamics in Eq.3 in the main text. Combining Eq.2 and Eq.3 in the main text, we obtain the recursive formula for the membrane potential prior to spike generation, expressed as follows:
    \begin{equation}
        U_{c}^{l}(t) = \tau (U_{c}^{l}(t-1) - O_{c}^{l}(t-1) r_{c}^{l}(t-1)) + \overline{I}^{l-1}_{c}(t) \label{eq:m_prior}
    \end{equation}

    By stacking Eq. \ref{eq:m_prior}, we have:
    \begin{equation} \label{eq:m_prior_stack}
        \resizebox{.91\linewidth}{!}{$
                \begin{aligned}
                     & U_{c}^{l}(t) - \tau U_{c}^{l}(t-1) = \overline{I}^{l-1}_{c}(t) - \tau O_{c}^{l}(t-1) r_{c}^{l}(t-1)                   \\
                     & \tau U_{c}^{l}(t-1) - \tau^2 U_{c}^{l}(t-2) = \tau \overline{I}^{l-1}_{c}(t-1) - \tau^2 O_{c}^{l}(t-2) r_{c}^{l}(t-2)
                \end{aligned}
            $}
    \end{equation}

    Notice that the left sides of Eq. \ref{eq:m_prior_stack} can iteratively eliminate $U_{c}^{l}(t)$. With the initial membrane potential $U_{c}^{l}(0)$ set to zero, we obtain the following:
    \begin{align} \label{eq:m_prior_define}
        U_{c}^{l}(t) = \underbrace{\sum_{i=1}^{t} \tau^{t-i} \overline{I}_{c}^{l-1}(i)}_{M_{1}(t)} - \underbrace{\sum_{i=1}^{t-1} \tau^{t+1-i} O_{c}^{l}(i) r_{c}^{l}(i)}_{M_{2}(t)}
    \end{align}

    From the above formula, The distribution of membrane potential is mainly composed of two factors $M_{1}(t)$ and $M_{2}(t)$. $M_{1}(t)$ is a weighted sum of independent Gaussian variables $\overline{I}_{c}^{l-1}(i)$, so it is still a Gaussian distribution with the following distribution parameters:
    \begin{equation}
        M_{1}(t) \sim N(\beta_c \sum_{i=1}^{t} \tau^{t-i} , (\lambda_{c}\alpha V_{th})^2 \sum_{i=1}^{t} \tau^{2(t-i)})
    \end{equation}

    In $M_{2}(t)$, $r_{c}^{l}(i)$ represents the value subtracted due to reset. Under the hard reset condition, where $r_{c}^{l}(i) = U_{c}^{l}(t-1)$, Eq \ref{eq:m_prior_define} becomes recursive. To facilitate derivation, we adopt the soft reset approach, setting $r_{c}^{l}(i) = v_{th}$. Furthermore, $O_{c}^{l}(i)$ and $U_{c}^{l}(t)$ exhibit a nonlinear coupling relationship, making precise analysis challenging. Therefore, we employ mean field theory to approximate the true values using the expected value of the spikes, yielding the following approximate expression:
    \begin{align}
        M_{2}(t) & \thickapprox  \sum_{i=1}^{t-1} \tau^{t+1-i} \mathbb{E}[O_{c}^{l}(i)] v_{th}         \\
                 & = \sum_{i=1}^{t-1} \tau^{t+1-i} \mathbb{P}[\overline{U}_{c}^{l}(i) > v_{th}] v_{th} \\
                 & = \sum_{i=1}^{t-1} \tau^{t+1-i} f^l(i) v_{th} = S(t)
    \end{align}

    In this case, the item $M_{2}(t)$ can be regarded as a constant $S(t)$, representing the cumulative response strength modulated by the spike firing rate $f^l(i)$.
    Therefore, we can get the approximate distribution of membrane potential $U_{c}^{l}(t)$ as follows:
    \begin{align}
        U_{c}^{l}(t) \sim N((\beta_c D(\tau, t) - S(t) , (\lambda_{c}\alpha V_{th})^2 D(\tau^2, t))
    \end{align}
    where $D(\tau, t) = \sum_{i=1}^{t} \tau^{t-i}$ is the cumulative decay function.
\end{proof}

\begin{theorem}
    \textbf{Adversarial Sensitivity Bound From Firing State Perspective}. For a SNN represented by $f:\mathbb{R}^n \to \mathbb{R}^m$, perturbation input $x + \varepsilon \delta$ within a $\ell_p$-constrainted perturbation ball $B_p(x, \varepsilon) = \{ x + \varepsilon \delta: \|\delta\|_p \le 1 \}$ induce a finite set of firing pattern transitions. The adversarial sensitivity $S_{\mathrm{adv}}(f,x,\varepsilon)$ upper bound can be approximated as:
    \begin{equation}
        S_{\mathrm{adv}}(f,x,\varepsilon) \le \varepsilon^2 \max_{1\leq k\leq K} \| W_{\mathcal{P}_k} \|_{p \to 2}^{2}
    \end{equation}
    where $W_{\mathcal{P}_k}$ represents the affine transformation matrix determined by the firing pattern $\mathcal{P}_k = \{(\ell,i,t): \boldsymbol{\overline{U}}_i^{\ell}(t) \geq 0\}$ and $\|\cdot\|_p$ denotes the $p$-norm.
\end{theorem}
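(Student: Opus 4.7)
The plan is to exploit the piecewise-affine structure of the SNN that emerges once its firing pattern is fixed. I would start by writing the adversarial sensitivity as $S_{\mathrm{adv}}(f,x,\varepsilon) = \sup_{\|\delta\|_p \le 1} \|f(x+\varepsilon\delta) - f(x)\|_2^2$ and observing that each spike decision $\overline{U}_i^{\ell}(t) \ge 0$ is a single linear inequality in the network input $x$. Conditioning on a full firing pattern $\mathcal{P}$ therefore resolves every Heaviside nonlinearity and LIF reset in the dynamics of Theorem~1, reducing the end-to-end map to an affine form $f(x) = W_{\mathcal{P}} x + b_{\mathcal{P}}$, where $W_{\mathcal{P}}$ is an explicit product of weight matrices, decay factors $\tau^{t-i}$, and diagonal $\{0,1\}$ gating matrices determined by $\mathcal{P}$.

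Next, finiteness of reachable patterns follows from counting arguments: with finitely many neurons and $T$ timesteps, only finitely many spike hyperplanes can pass through the $\ell_p$-ball $B_p(x,\varepsilon)$. These hyperplanes partition $B_p(x,\varepsilon)$ into cells $R_1,\dots,R_K$, each corresponding to a distinct pattern $\mathcal{P}_k$, with $K$ bounded by the number of cells an arrangement of these hyperplanes can cut out (at most $2^N$ where $N$ is the total count of neuron-time events, and typically far smaller for moderate $\varepsilon$).

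The region-wise bound then follows directly from the definition of the mixed operator norm. For any admissible $\delta$, the perturbed input $x+\varepsilon\delta$ lies in some cell $R_k$, and the deviation decomposes as $f(x+\varepsilon\delta)-f(x) = W_{\mathcal{P}_k}(\varepsilon\delta) + c_k$, where $c_k$ collects the baseline offset between the cell $R_0$ containing $x$ and the cell $R_k$. Applying $\|W_{\mathcal{P}_k} u\|_2 \le \|W_{\mathcal{P}_k}\|_{p\to 2}\|u\|_p$ with $u=\varepsilon\delta$, taking the supremum over $\|\delta\|_p\le 1$ and the maximum over $k$, and squaring yields the claimed inequality.

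The main obstacle will be the offset term $c_k$: the non-differentiable spike function introduces a finite jump at every firing-pattern boundary, and these jumps are not controlled by an operator norm on $W_{\mathcal{P}_k}$ alone. The natural remedy, and the one consistent with Theorem~1, is to invoke the mean-field / soft-reset smoothing used there. Under the Gaussian pre-activation distribution $\overline{U}_c^{\ell}(t) \sim N(\beta_c D(\tau,t) - S(t), (\lambda_c \alpha V_{th})^2 D(\tau^2,t))$ derived earlier, the probability that a perturbation of magnitude $O(\varepsilon)$ flips any single firing decision is itself $O(\varepsilon)$, so the expected contribution of the jumps $c_k$ is lower order than the affine term $W_{\mathcal{P}_k}(\varepsilon\delta)$ and may be absorbed into the approximation symbol. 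I would state this smoothing assumption explicitly up front, since it is precisely what converts the strict inequality for a fixed affine piece into the approximate bound claimed in the theorem.
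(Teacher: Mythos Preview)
Your approach matches the paper's: define the firing pattern, observe that the network is affine on each polyhedral cell $C_{\mathcal{P}}$, bound $\|f(x+\varepsilon\delta)-f(x)\|_2^2$ by $\varepsilon^2\|W_{\mathcal{P}}\|_{p\to 2}^2$ on a single cell via the operator norm, and then take the maximum over the finitely many cells meeting $B_p(x,\varepsilon)$.

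The one substantive divergence is your handling of the cross-cell offset $c_k$. The paper does \emph{not} invoke Theorem~1 or any mean-field / Gaussian smoothing in this proof; it simply writes the single-cell bound for perturbations with $x+\varepsilon\delta\in C_{\mathcal{P}}$ and then asserts the global inequality by maximizing over the $K$ intersecting cells, leaving the jump contribution implicit in the qualifier ``approximated'' in the theorem statement. Your plan to control $c_k$ through the pre-activation distribution of Theorem~1 is therefore more careful than the paper's own argument, but it is extra scaffolding rather than a missing ingredient: to reproduce the paper's proof you can drop that step and treat the cross-cell term heuristically, as the paper does.
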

\begin{proof}
    Let $f:\mathbb{R}^n \to \mathbb{R}^m$ denote the SNN that maps the input space to the output space. For a given input $x$, the firing pattern $P(x)$ generated by the network can be defined as:
    \begin{equation}
        P(x)=\{(\ell,i, t): \boldsymbol{\overline{U}}_i^{\ell}(t) \geq 0 \}
    \end{equation}
    where $\boldsymbol{\overline{U}}_i^{\ell}(t) = \boldsymbol{U}_i^{\ell}(t; x) - v_{th}^{\ell}$. Therefore, each firing pattern $\mathcal{P}$ also corresponds uniquely to a convex polyhedral cell $C_\mathcal{P}$ in the input space:
    \begin{equation}
        C_{\mathcal{P}}=\{x\in\mathbb{R}^n
        :P(x)=\mathcal{P}\}.
    \end{equation}

    Within any $C_\mathcal{P}$, the firing pattern $\mathcal{P}$ can be considered a fixed binary gate, so the network reduces to an affine map:
    \begin{equation}
        f(x)=W_\mathcal{P} x+b_\mathcal{P}, x \in C_\mathcal{P}
    \end{equation}
    where $W_\mathcal{P} \in \mathbb{R}^{m \times n}$ and $b_\mathcal{P} \in \mathbb{R}^m$ are determined by the weights and biases under this gate configuration.

    Let the $\ell_p$-constrainted perturbation ball be $B_p(x, \varepsilon) = \{ x + \varepsilon \delta: \|\delta\|_p \le 1 \}$, where $\varepsilon$ is the perturbation budget and $\delta$ represents a perturbation direction vector. When the small perturbations $\varepsilon \delta$ that stays inside the same cell ($x + \varepsilon \delta \in C_\mathcal{P}$), the local upper bound of the output $f(x)$ is given by:
    \begin{equation}
        \| f(x+\varepsilon \delta) - f(x) \|_2^{2} = \| \varepsilon W_\mathcal{P} \delta \|_2^{2} \le \varepsilon^2 \| W_\mathcal{P}  \|_{p \to 2}^{2}
    \end{equation}
    where $\| W_\mathcal{P}  \|_{p \to 2}^{2}$ denotes the operator norm of the matrix $W_\mathcal{P}$ with respect to the input $p$-norm and output 2-norm.

    However, the adversarial sensitivity $S_\mathrm{adv}$ measures the maximum change in the output of $f$ caused by a small perturbation $\varepsilon \delta$, as follows:
    \begin{equation}
        S_{\mathrm{adv}}(f,x,\varepsilon) = \max_{\delta \in B_p(x, \varepsilon)} \bigl\| f(x+\varepsilon\delta)-f(x)\,\bigr\|_2^{2}.
    \end{equation}

    Therefore, when maximizing the "worst-case direction", we need to consider the maximum upper bound in all finite $K$ cells that intersect with the perturbation ball. The global upper bound of adversarial sensitivity is expressed as
    \begin{equation}
        S_{\mathrm{adv}}(f,x,\varepsilon) \le \varepsilon^2 \max_{1\leq k\leq K} \| W_{\mathcal{P}_k} \|_{p \to 2}^{2}
    \end{equation}

    Since the matrix $W_{\mathcal{P}_k}$ is determined by the firing pattern $\mathcal{P}_k$, it is influenced by the relationship between neuronal membrane potential and threshold. The closer the membrane potential is to the threshold, the easier it is for the neuron's firing state to flip, resulting in more convex polyhedral cells intersecting the perturbation ball and a higher upper bound. Therefore, larger margins between membrane potential and threshold shrink the set of reachable cells and typically reduce the dominating $\| W_{\mathcal{P}_k} \|_{p \to 2}^{2}$.

    In surrogate gradient training, the effective gradient magnitudes of the spike nonlinearity decrease with increasing membrane-threshold margins (cf. Eq.7 in the main text). Consequently, smaller gradient magnitudes $\frac{\boldsymbol{O}^{\ell}(t)}{\boldsymbol{U}^{\ell}(t)}$ implies larger margins $\boldsymbol{\overline{U}}^{\ell}(t)$, yielding a more stable firing pattern, which in turn reduces variability in the affine transformation $W_{\mathcal{P}_k}$ and lowers the dominating operator norm $\| W_{\mathcal{P}_k} \|_{p \to 2}^{2}$, resulting in a smaller adversarial sensitivity:
    \begin{equation}
        \frac{\boldsymbol{O}^{\ell}(t)}{\boldsymbol{U}^{\ell}(t)} \downarrow \Longrightarrow \boldsymbol{\overline{U}}^{\ell}(t) \uparrow \Longrightarrow S_{\mathrm{adv}}(f,x,\varepsilon)\downarrow.
    \end{equation}
\end{proof}

\begin{figure*}[t]
    \begin{subfigure}[b]{0.49\textwidth}
        \includegraphics[scale=0.42]{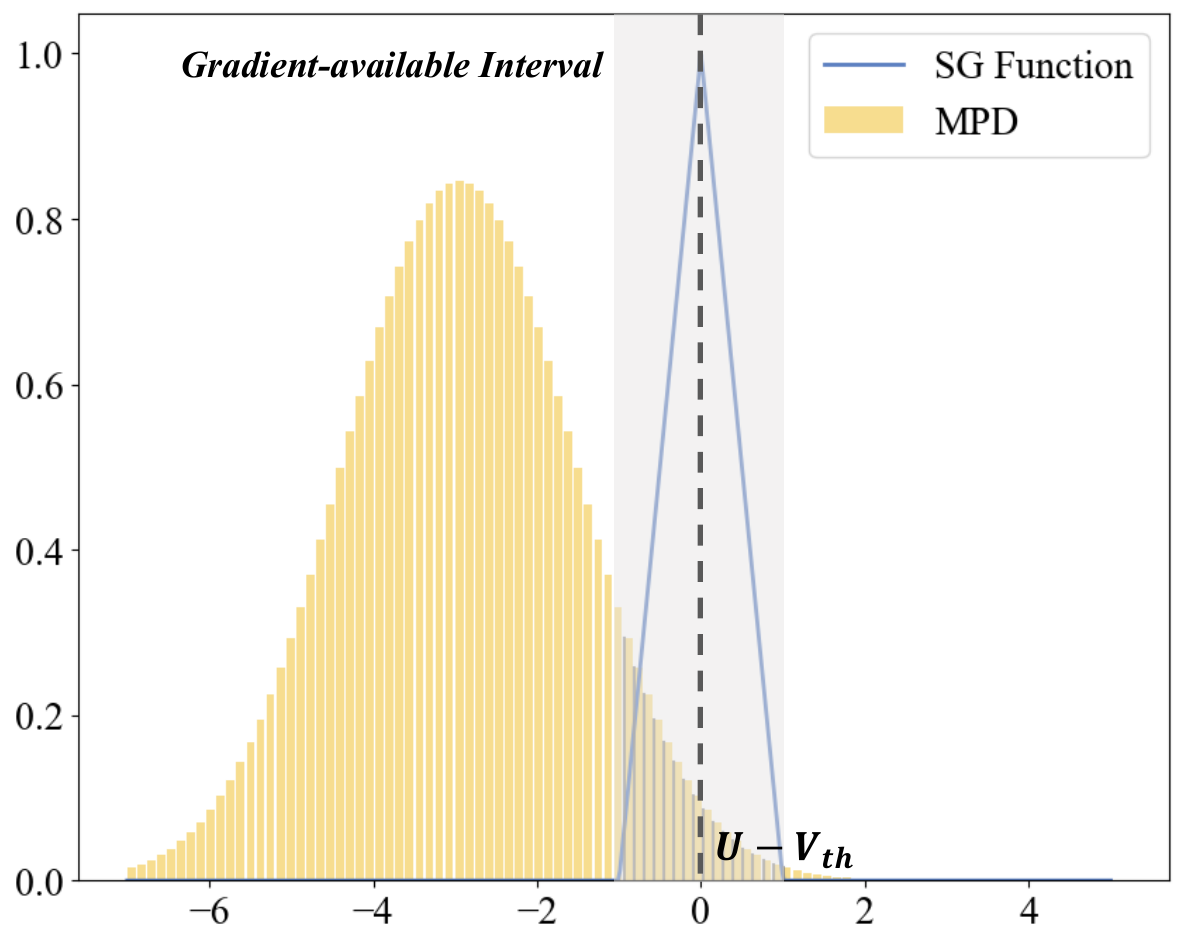}
        \caption{Degeneration}
        \label{fg:mpd_case1}
    \end{subfigure}
    \begin{subfigure}[b]{0.49\textwidth}
        \includegraphics[scale=0.42]{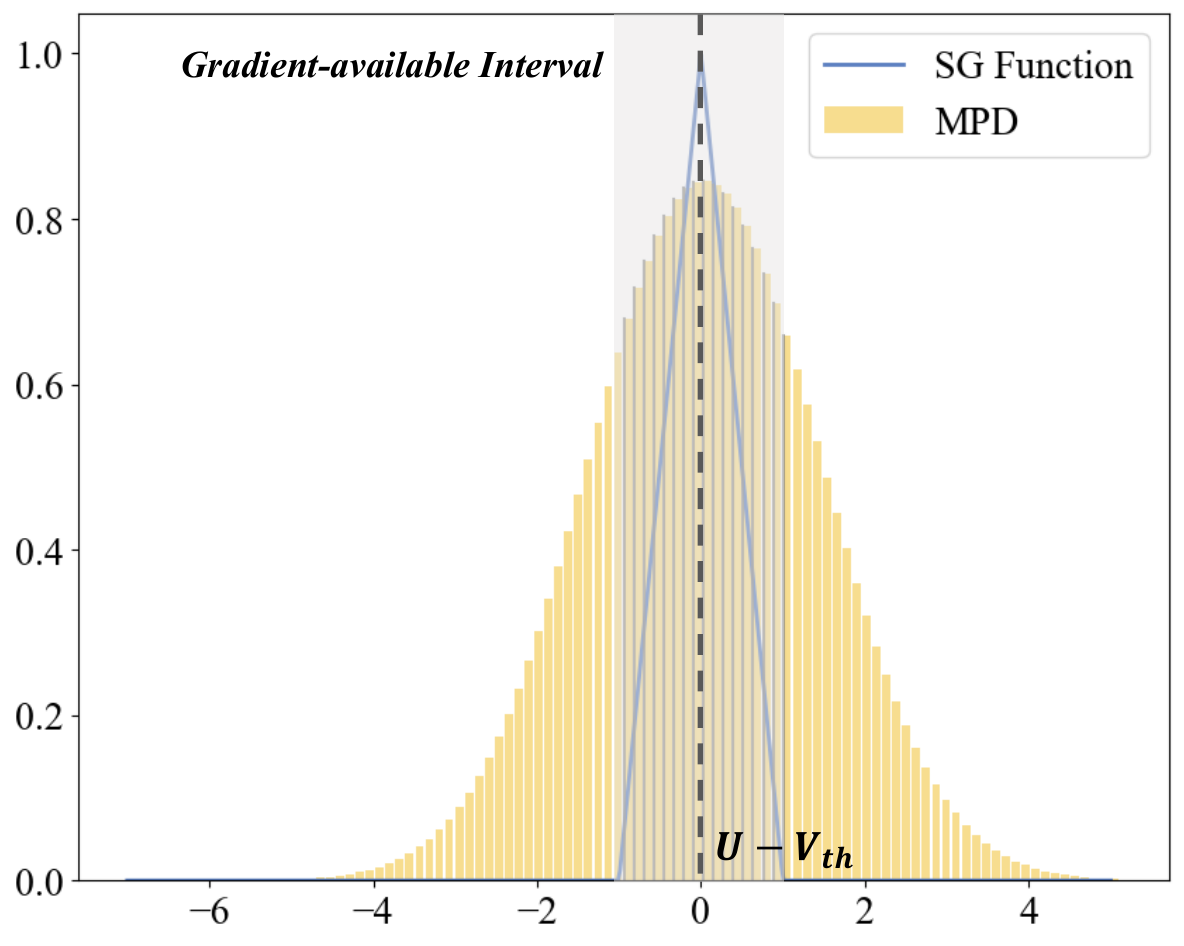}
        \caption{Saturated}
        \label{fg:mpd_case2}
    \end{subfigure}
    \caption{The relationship between the surrogate gradient (SG) function and membrane potential distribution (MPD)}
    \label{fg:mpd_case}
\end{figure*}

\section{The relationship between surrogate gradient function and membrane potential distribution}
As illustrated in Figure.\ref{fg:mpd_case}, the relationship between the surrogate gradient (SG) function and membrane potential distribution (MPD) will appear several extreme cases: (1) Degeneration (Fig.\ref{fg:mpd_case1}): almost all the membrane potential values of the neurons in a channel are below the firing threshold. Thus very few membrane potentials carry gradients, which obstructs gradient propagation; (2) Saturated (Figure.\ref{fg:mpd_case2}): Almost all the membrane potential values in a channel fall into the gradient-available interval, it is equivalent to use SGs for all gradient computation, which will enlarge the approximated errors from the accurate gradients.

\section{Analysis of MPD-SGR methods}
\subsubsection{The effect of balanced coefficient $\eta$.}
The balanced coefficient $\eta$ determines the strength of the proposed MPD-SGR effect. A larger $\eta$ reduces the proportion of membrane potentials contributing effectively to the gradient computation, obstructing gradient propagation and lowering clean accuracy. Conversely, a smaller $\eta$ imposes insufficient constraints on the surrogate gradient, failing to improve robustness effectively. Therefore, we perform an extensive exploration of the optimal coefficient $\eta$ to strike a trade-off between accuracy on clean data (clean accuracy) and robustness on adversarial inputs (adversarial accuracy). The investigation focuses on the CIFAR-10 dataset, with an SNN version of the VGG11 architecture and a timestep setting of $T=2$.

To assess the impact of $\eta$ varying within the range of 0.01 to 0.1, we conduct experiments under BPTT-based FGSM and PGD attacks (Figure~\ref{fg:eta}). We also provide the clean and adversarial accuracy of the vanilla SNN model as a baseline reference, as indicated by the dashed line in Figure~\ref{fg:eta}. It can be seen that the optimal value of $\eta$ appears to be around 0.05, striking a suitable balance between accuracy and robustness.

\begin{figure}[t]
    \includegraphics[width=0.47\textwidth]{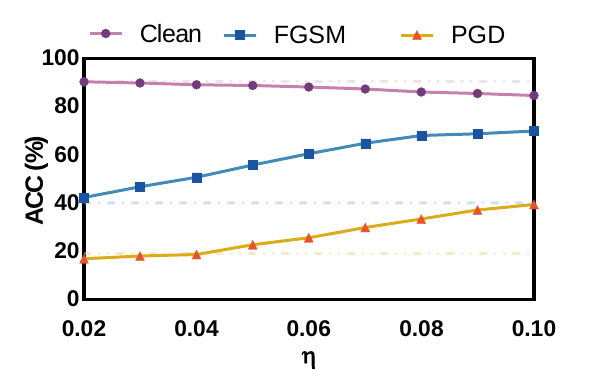}
    \caption{The influence of the coefficient parameter $\eta$ on clean and adversarial accuracy under BPTT-based attacks. The dashed line indicates the accuracy of the vanilla SNN model}
    \label{fg:eta}
\end{figure}

\begin{figure*}[!htbp]
    \includegraphics[width=1\textwidth]{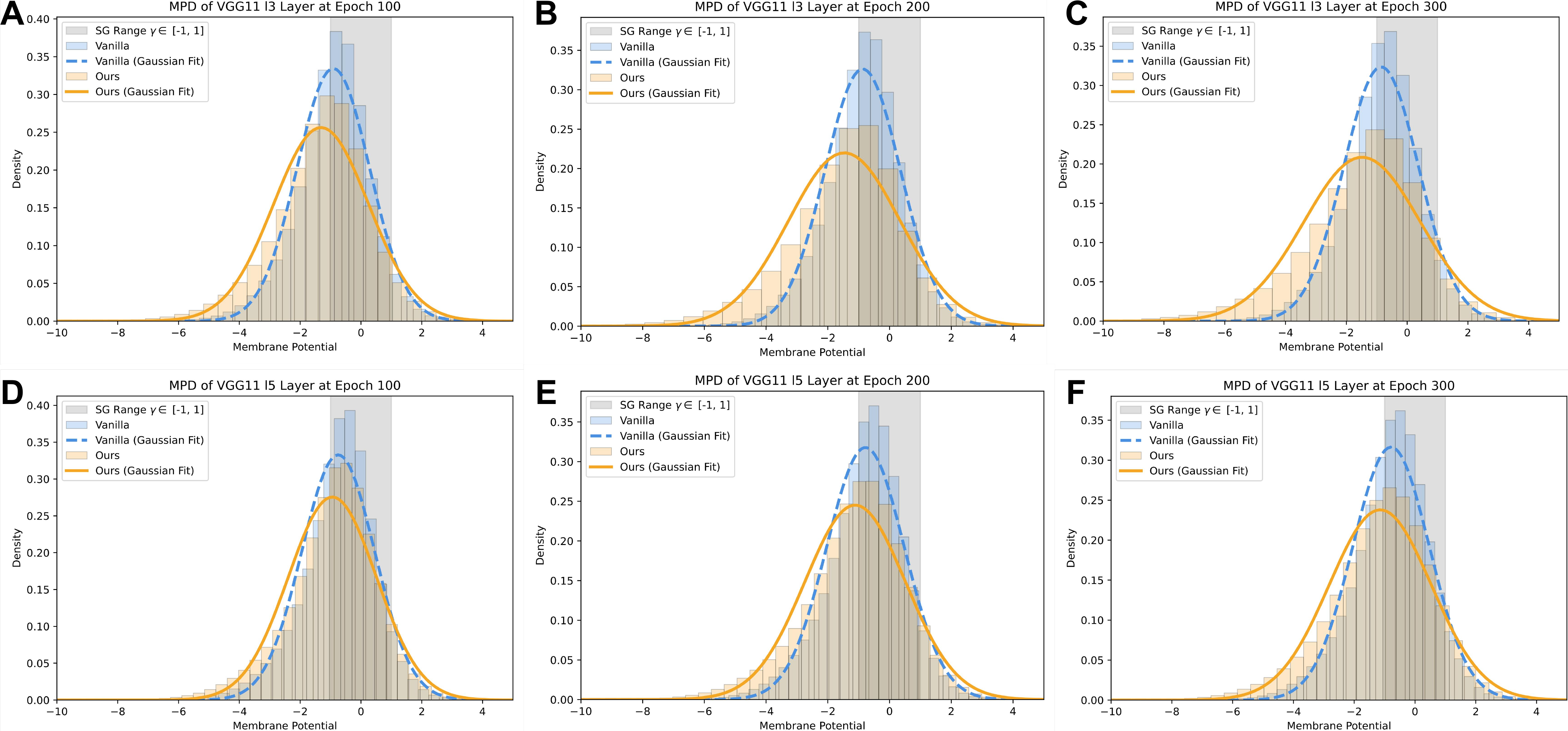}
    \caption{Membrane potential distribution (MPD) and Gaussian fitting during training. Subfigures (A-C) show the MPD histograms and corresponding Gaussian fits for the third layer of VGG11 at epochs 100, 200, and 300, respectively. Subfigures (D-F) display the same for the fifth layer. The empirical MPD closely matches a Gaussian curve, supporting our MPD-SGR strategy based on mean and variance control.}
    \label{fg:mpd_hist}
\end{figure*}

\begin{figure*}[!htbp]
    \includegraphics[width=1\textwidth]{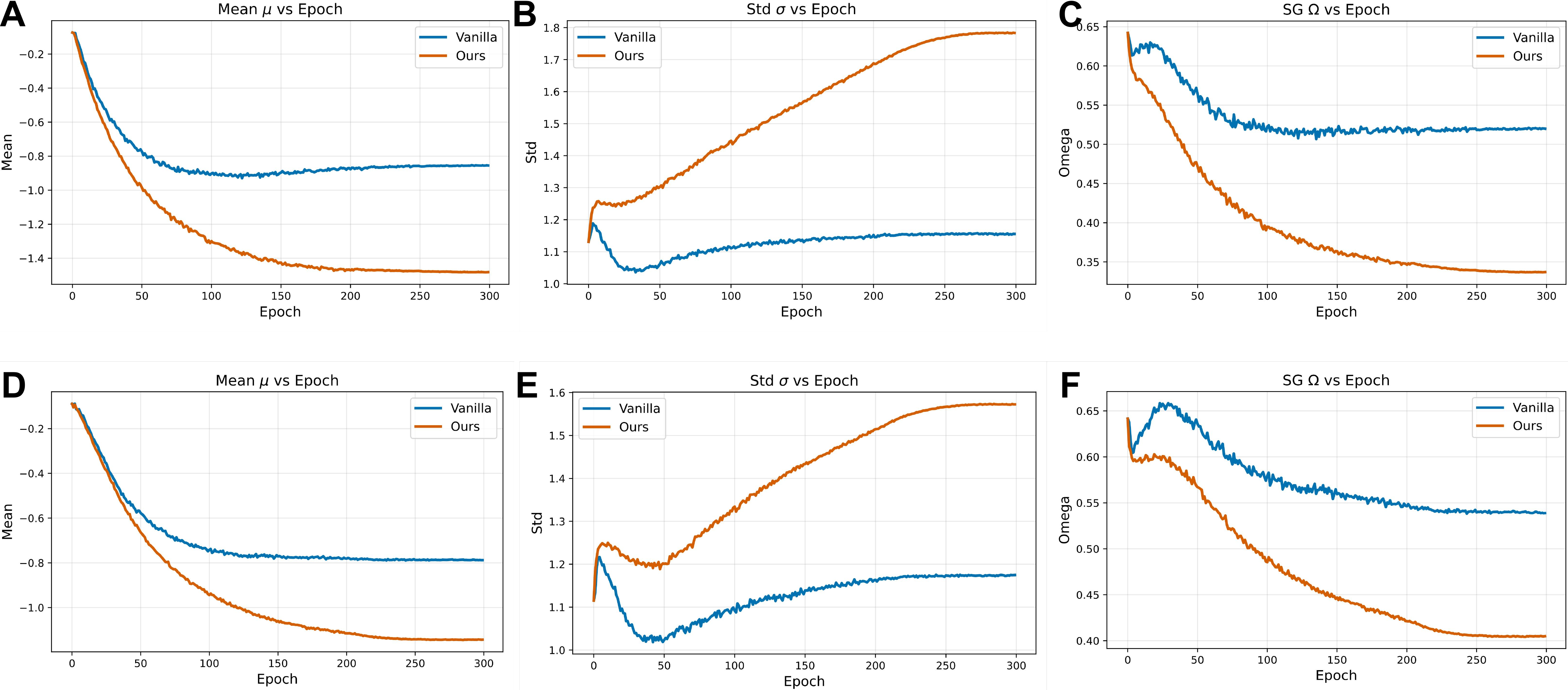}
    \caption{Evolution of MPD statistics and the regularization term $\Omega$ during training. Subfigures (A-C) show the changes in the mean, standard deviation, and $\Omega$ for the third layer of VGG11, respectively, while (D-F) present the same for the fifth layer. }
    \label{fg:mpd_curve}
\end{figure*}

\subsubsection{Analysis of MPD and SG overlap evolution}
To empirically validate our theoretical method, we visualized the membrane potential distribution (MPD) of neurons during training, along with its Gaussian fitting results, as shown in Figure~\ref{fg:mpd_hist}. The empirical MPD exhibits a close approximation to a Gaussian distribution, indicating that the membrane dynamics during training can be statistically characterized by its mean and variance. Building on this observation, the proposed MPD-SGR method explicitly regulates the mean and variance of the MPD to reduce the proportion of potentials that fall within the gradient-available interval of the surrogate gradient (SG) function (overlap $\Omega$), thereby offering empirical support for both the theoretical derivation and the design of the algorithm.

we further analyze the evolution of MPD statistics throughout training, including the mean, standard deviation, and the proposed regularization term $\Omega$, as shown in Figure~\ref{fg:mpd_curve}. The results reveal that MPD-SGR consistently reduces the mean and increases the variance of the membrane potential distribution over time. This transformation shifts more membrane potentials outside the gradient-available interval of the SG function, thereby reducing the overlap region $\Omega$. Such behavior aligns with our theoretical motivation and empirically confirms the effectiveness of MPD-SGR in promoting robustness by regulating gradient during training.

\begin{table*}[!htbp]
    \setlength{\tabcolsep}{1mm}
    \begin{center}
        \begin{sc}
            \begin{tabular}{c|c|c|c|c|c|c|c}
                \toprule
                \textbf{Dataset}           & \textbf{T}         & \textbf{Model} & \textbf{Clean}                        & \textbf{FGSM}        & \textbf{PGD}         & \textbf{BIM}         & \textbf{CW}          \\
                \midrule
                \multirow{10}{*}{CIFAR10}  & \multirow{2}{*}{8} & VGG11, BPTT    & \multirow{2}{*}{\textbf{92.49}/90.69} & 25.18/\textbf{47.59} & 0.88/\textbf{20.55}  & 0.60/\textbf{16.85}  & 7.19/\textbf{24.50}  \\
                                           &                    & VGG11, BPTR    &                                       & 21.50/\textbf{43.32} & 3.37/\textbf{27.18}  & 2.87/\textbf{26.54}  & 19.74/\textbf{37.60} \\
                \cmidrule{2-8}
                                           & \multirow{4}{*}{4} & VGG11, BPTT    & \multirow{2}{*}{\textbf{92.45}/90.62} & 23.86/\textbf{42.12} & 0.87/\textbf{17.18}  & 0.60/\textbf{14.59}  & 6.26/\textbf{18.85}  \\
                                           &                    & VGG11, BPTR    &                                       & 27.92/\textbf{47.61} & 7.82/\textbf{25.52}  & 6.61/\textbf{25.00}  & 24.38/\textbf{38.07} \\
                                           &                    & WRN16, BPTT    & \multirow{2}{*}{\textbf{93.46}/92.22} & 18.81/\textbf{45.82} & 0.02/\textbf{8.44}   & 0.03/\textbf{6.59}   & 3.61/\textbf{18.66}  \\
                                           &                    & WRN16, BPTR    &                                       & 15.28/\textbf{38.18} & 0.17/\textbf{8.03}   & 0.16/\textbf{6.65}   & 11.76/\textbf{31.21} \\
                \cmidrule{2-8}
                                           & \multirow{4}{*}{2} & VGG11, BPTT    & \multirow{2}{*}{\textbf{92.22}/90.18} & 18.17/\textbf{48.08} & 0.33/\textbf{13.03}  & 0.24/\textbf{11.17}  & 5.31/\textbf{14.63}  \\
                                           &                    & VGG11, BPTR    &                                       & 46.91/\textbf{61.69} & 15.07/\textbf{27.40} & 13.48/\textbf{26.93} & 36.43/\textbf{61.12} \\
                                           &                    & WRN16, BPTT    & \multirow{2}{*}{\textbf{92.96}/91.81} & 17.13/\textbf{48.20} & 0.00/\textbf{6.85}   & 0.01/\textbf{5.44}   & 2.27/\textbf{15.68}  \\
                                           &                    & WRN16, BPTR    &                                       & 25.70/\textbf{54.31} & 1.15/\textbf{14.93}  & 1.12/\textbf{14.16}  & 17.95/\textbf{47.59} \\
                \hline
                \multirow{10}{*}{CIFAR100} & \multirow{2}{*}{8} & VGG11, BPTT    & \multirow{2}{*}{\textbf{72.82}/70.42} & 10.14/\textbf{34.51} & 0.27/\textbf{9.03}   & 0.31/\textbf{8.41}   & 6.18/\textbf{16.12}  \\
                                           &                    & VGG11, BPTR    &                                       & 8.53/\textbf{21.25}  & 1.64/\textbf{16.66}  & 1.64/\textbf{16.44}  & 18.75/\textbf{24.96} \\
                \cmidrule{2-8}

                                           & \multirow{4}{*}{4} & VGG11, BPTT    & \multirow{2}{*}{\textbf{72.54}/69.56} & 8.95/\textbf{20.16}  & 0.24/\textbf{6.90}   & 0.20/\textbf{5.51}   & 5.62/\textbf{13.54}  \\
                                           &                    & VGG11, BPTR    &                                       & 11.45/\textbf{34.49} & 4.66/\textbf{10.07}  & 4.44/\textbf{9.94}   & 25.03/\textbf{26.99} \\
                                           &                    & WRN16, BPTT    & \multirow{2}{*}{\textbf{73.86}/72.61} & 10.99/\textbf{22.58} & 0.04/\textbf{1.97}   & 0.05/\textbf{1.77}   & 4.93/\textbf{10.34}  \\
                                           &                    & WRN16, BPTR    &                                       & 9.74/\textbf{21.60}  & 0.44/\textbf{5.52}   & 0.35/\textbf{5.50}   & 14.81/\textbf{24.13} \\
                \cmidrule{2-8}

                                           & \multirow{4}{*}{2} & VGG11, BPTT    & \multirow{2}{*}{\textbf{71.73}/69.23} & 7.29/\textbf{29.29}  & 0.13/\textbf{6.68}   & 0.10/\textbf{5.88}   & 3.12/\textbf{13.04}  \\
                                           &                    & VGG11, BPTR    &                                       & 18.48/\textbf{36.03} & 5.96/\textbf{15.81}  & 5.57/\textbf{15.25}  & 26.40/\textbf{35.41} \\
                                           &                    & WRN16, BPTT    & \multirow{2}{*}{\textbf{71.79}/71.35} & 11.04/\textbf{21.43} & 0.05/\textbf{1.23}   & 0.06/\textbf{1.00}   & 3.75/\textbf{7.74}   \\
                                           &                    & WRN16, BPTR    &                                       & 14.36/\textbf{29.69} & 2.06/\textbf{12.99}  & 1.84/\textbf{12.60}  & 15.34/\textbf{33.40} \\
                \bottomrule
            \end{tabular}
        \end{sc}
    \end{center}
    \caption{The classification accuracy (Vanilla/Ours) under white-box attacks. The attack perturbation strength is set to $\epsilon= 8/255$ for all attacks, the iterative steps $k= 7$ and step size $\alpha= 0.01$ for PGD, BIM.} \label{tb:wb}
\end{table*}

\begin{table*}[!htbp]
    \setlength{\tabcolsep}{1mm}
    \begin{center}
        \begin{sc}
            \begin{tabular}{c|c|c|c|c|c|c|c}
                \toprule
                \textbf{Dataset}           & \textbf{T}         & \textbf{Model} & \textbf{Clean}                        & \textbf{FGSM}        & \textbf{PGD}         & \textbf{BIM}         & \textbf{CW}          \\
                \midrule
                \multirow{10}{*}{CIFAR10}  & \multirow{2}{*}{8} & VGG11, BPTT    & \multirow{2}{*}{\textbf{91.41}/90.69} & 45.00/\textbf{59.27} & 22.95/\textbf{33.38} & 20.80/\textbf{32.61} & 36.04/\textbf{49.50} \\
                                           &                    & VGG11, BPTR    &                                       & 41.16/\textbf{61.01} & 23.07/\textbf{36.43} & 21.25/\textbf{35.50} & 54.93/\textbf{59.88} \\\cmidrule{2-8}
                                           & \multirow{4}{*}{4} & VGG11, BPTT    & \multirow{2}{*}{\textbf{91.08}/90.34} & 43.80/\textbf{58.71} & 20.63/\textbf{28.94} & 18.63/\textbf{24.74} & 29.87/\textbf{34.38} \\
                                           &                    & VGG11, BPTR    &                                       & 49.11/\textbf{66.17} & 35.74/\textbf{43.12} & 34.59/\textbf{41.17} & 58.73/\textbf{65.05} \\
                                           &                    & WRN16, BPTT    & \multirow{2}{*}{91.15/\textbf{91.34}} & 42.31/\textbf{63.32} & 19.93/\textbf{38.11} & 18.03/\textbf{33.13} & 29.60/\textbf{43.06} \\
                                           &                    & WRN16, BPTR    &                                       & 51.75/\textbf{70.93} & 34.44/\textbf{46.91} & 33.00/\textbf{44.79} & 58.90/\textbf{70.10} \\\cmidrule{2-8}
                                           & \multirow{4}{*}{2} & VGG11, BPTT    & \multirow{2}{*}{\textbf{90.71}/89.92} & 40.26/\textbf{55.89} & 19.18/\textbf{22.76} & 17.40/\textbf{19.24} & 28.49/\textbf{30.51} \\
                                           &                    & VGG11, BPTR    &                                       & 51.87/\textbf{66.28} & 40.70/\textbf{58.72} & 49.39/\textbf{57.05} & 68.25/\textbf{83.46} \\
                                           &                    & WRN16, BPTT    & \multirow{2}{*}{\textbf{92.96}/91.81} & 17.13/\textbf{48.20} & 0.00/\textbf{6.85}   & 0.01/\textbf{5.44}   & 2.27/\textbf{15.68}  \\
                                           &                    & WRN16, BPTR    &                                       & 25.70/\textbf{54.31} & 1.15/\textbf{14.93}  & 1.12/\textbf{14.16}  & 17.95/\textbf{47.59} \\
                \hline
                \multirow{10}{*}{CIFAR100} & \multirow{2}{*}{8} & VGG11, BPTT    & \multirow{2}{*}{69.43/\textbf{69.56}} & 19.07/\textbf{39.45} & 9.23/\textbf{22.23}  & 8.41/\textbf{19.45}  & 18.82/\textbf{22.53} \\
                                           &                    & VGG11, BPTR    &                                       & 25.44/\textbf{48.0}  & 15.74/\textbf{33.84} & 14.68/\textbf{33.29} & 30.33/\textbf{43.82} \\\cmidrule{2-8}

                                           & \multirow{4}{*}{4} & VGG11, BPTT    & \multirow{2}{*}{\textbf{68.60}/67.67} & 20.28/\textbf{36.18} & 9.97/\textbf{18.23}  & 8.82/\textbf{15.70}  & 15.46/\textbf{22.33} \\
                                           &                    & VGG11, BPTR    &                                       & 29.78/\textbf{40.42} & 22.14/\textbf{26.38} & 21.27/\textbf{25.88} & 40.96/\textbf{47.44} \\

                                           &                    & WRN16, BPTT    & \multirow{2}{*}{69.29/\textbf{69.67}} & 25.35/\textbf{39.47} & 12.09/\textbf{21.71} & 11.30/\textbf{18.87} & 20.17/\textbf{27.55} \\
                                           &                    & WRN16, BPTR    &                                       & 31.99/\textbf{44.23} & 22.58/\textbf{35.67} & 22.04/\textbf{35.08} & 43.23/\textbf{53.21} \\\cmidrule{2-8}

                                           & \multirow{4}{*}{2} & VGG11, BPTT    & \multirow{2}{*}{\textbf{67.79}/66.90} & 18.94/\textbf{35.10} & 9.00/\textbf{15.89}  & 8.27/\textbf{13.70}  & 15.32/\textbf{20.56} \\
                                           &                    & VGG11, BPTR    &                                       & 35.10/\textbf{42.65} & 28.79/\textbf{37.58} & 27.96/\textbf{37.22} & 42.40/\textbf{51.55} \\

                                           &                    & WRN16, BPTT    & \multirow{2}{*}{67.04/\textbf{67.69}} & 24.51/\textbf{35.51} & 11.31/\textbf{18.69} & 10.49/\textbf{17.06} & 21.40/\textbf{24.96} \\
                                           &                    & WRN16, BPTR    &                                       & 35.93/\textbf{49.97} & 28.71/\textbf{45.50} & 27.88/\textbf{44.21} & 45.07/\textbf{57.55} \\

                \bottomrule
            \end{tabular}
        \end{sc}
    \end{center}
    \caption{The classification accuracy (Vanilla/Ours with AT) under white-box attacks. The attack perturbation strength is set to $\epsilon= 8/255$ for all attacks, the iterative steps $k= 7$ and step size $\alpha= 0.01$ for PGD, BIM.} \label{tb:wb_at}
\end{table*}

\begin{figure*}[t]
    \centering
    \begin{subfigure}[b]{0.47\textwidth}
        \includegraphics[scale=0.6]{WB}
        \caption{CIFAR10}
        \label{fg:wb1}
    \end{subfigure}
    \hfill
    \begin{subfigure}[b]{0.47\textwidth}
        \includegraphics[scale=0.58]{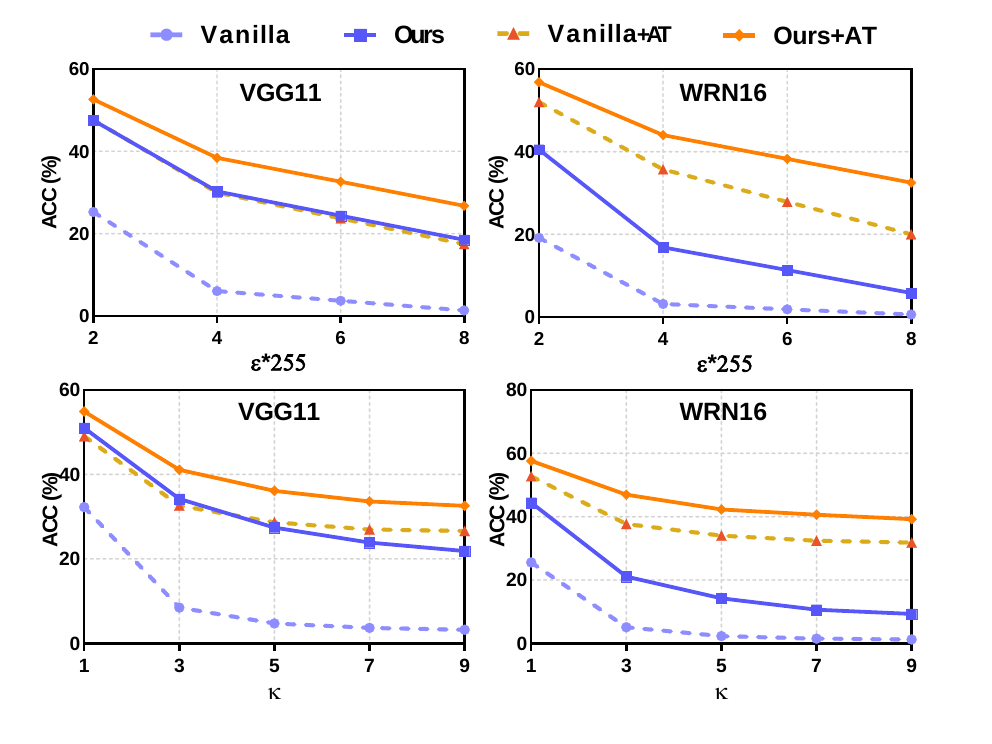}
        \caption{CIFAR100}
        \label{fg:wb2}
    \end{subfigure}
    \caption{Performance of the white-box PGD attack with increasing perturbation $\epsilon$ and iterative step $k$ = 4 (Top Panels), increasing iterative step $k$ and $\epsilon$ = 8/255 (Bottom Panels).}
    \label{fg:wb-plus}
\end{figure*}

\begin{figure}[t]
    \centering
    \includegraphics[width=0.5\textwidth]{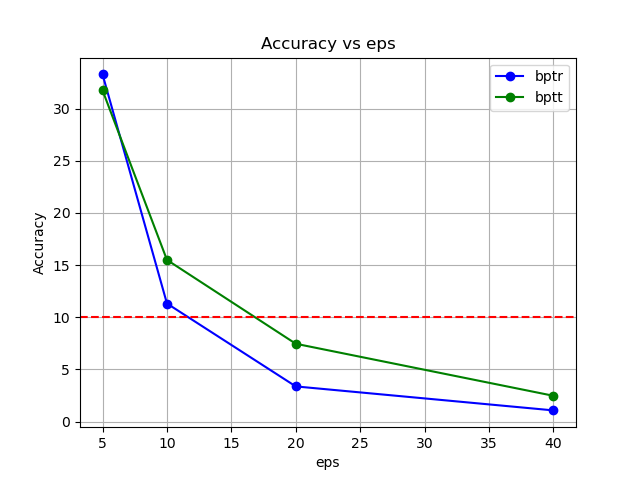}
    \caption{Performance of untargeted PGD attacks with different approximation. The red dashed line represents the accuracy of random guess (10\% for CIFAR-10).}
    \label{fg:unbound_attack}
\end{figure}

\begin{table*}[t]
    \caption{Checklist for characteristic behaviors caused by obfuscated and masked gradients.}
    \label{tb:checklist}
    \begin{center}
        \begin{tabular}{|p{12cm}|c|c|c|}
            \hline
            \textbf{Items to identify gradient obfuscation}                      & \textbf{BPTR} & \textbf{BPTT} \\
            \hline
            (1) Single-step attack performs better compared to iterative attacks & Pass          & Pass          \\
            \hline
            (2) Black-box attacks performs better compared to white-box attacks  & Pass          & Pass          \\
            \hline
            (3) Increasing perturbation bound can't increase attack strength     & Pass          & Pass          \\
            \hline
            (4) Unbounded attacks can't reach $\sim 100\%$ success               & Pass          & Pass          \\
            \hline
            (5) Adversarial example can be found through random sampling         & Pass          & Pass          \\
            \hline
        \end{tabular}
    \end{center}
\end{table*}

\section{Details of Implementation}
\subsection{Experimental settings}
The training process lasts for 300 epochs and all reported results are averaged over five independent algorithm runs. In the training process, the stochastic gradient descent (sgd) is deployed, and the initial learning rate is set to 0.1. The learning rate uses a cosine annealing schedule with $T_{max}$ equaling the max number of epochs. The Threshold-dependent batch normalization (tdBN) is used in the network to overcome the gradient vanishing or explosion for deep SNNs as suggested by \cite{zheng2021going}. The image data is first normalized by the means and variances of the three channels and then fed into SNNs to trigger spikes. All the experiments are conducted on the PyTorch platform on NVIDIA GeForce RTX 3090. Note that we adapt and modify the implementation of gradient-based attacks of the torchattacks Python package \cite{kim2020torchattacks} as we need to perform successful attacks on SNNs. LIF neurons are used with a threshold of $v_{th}= 1.0$ and a decay factor of $\tau=1$, consistent with previous work~\cite{ding2022snn}. The SG function is configured with $\gamma$=1.

\subsection{Adversarial attack settings}
We mainly explore two types of attack scenarios: in white-box (WB) attacks, the attacker has complete access to the target model, including its architecture and parameters, while in black-box (BB) attacks, the attacker has no direct knowledge of the model and typically constructs a substitute model to approximate the behavior of the target model.
In this work, four classic adversarial attack algorithms are used to evaluate the vulnerability of SNN: Fast Gradient Sign Method (FGSM)~\cite{goodfellow2014explaining}, Projected Gradient Descent (PGD)~\cite{madry2017towards}, Basic Iterative Method (BIM)~\cite{kurakin2018adversarial} and CW~\cite{carlini2017towards} attacks. Given a classification model $f$ with dataset $(x, y_{\text{true}})$, where $x$ is the clean image and $y_{\text{true}}$ is the corresponding correct label. The formulations of the attacks we used in this study are described as follows:

\textbf{FGSM.} FGSM aims to perturb the original data $x$ along the sign direction of the gradient on loss function with one step to increase the perturbed linear output, thus fool the network. It can be formalized as follows:
\begin{equation}
    \hat{x} = x + \epsilon \cdot \text{sign}(\nabla_x \mathcal{L}(f(x), y_{\text{true}})),
\end{equation}
where $\text{sign}(\cdot)$ is an odd mathematical function that extracts the sign of a real number.

\textbf{PGD.} PGD attack is the iterative variant of FGSM. It first starts from a random perturbation in the $L_p$-norm constraint around the original sample $x$, then takes a gradient iteration step in the sign direction to achieve the greatest loss output. It can be formalized as follows:
\begin{equation}
    \hat{x}^0 = x + \mathcal{U}(-\epsilon, +\epsilon),
\end{equation}
\begin{equation}
    \hat{x}^{k+1} = \text{Clip}_{x, \epsilon} \big\{\hat{x}^k + \alpha \cdot \text{sign}(\nabla_x \mathcal{L}(f(\hat{x}^k), y_{\text{true}})) \big\},
\end{equation}
where $k$ is the iterative step, $\alpha$ is the step size for each attack iteration, $\epsilon$ controls the perturbation level. $\mathcal{U}(\cdot)$ is a uniform function, $\text{Clip}_{x, \epsilon}\{x\}$ is the function which performs per-pixel clipping of the image $\hat{x}$, so the result will be in $L_\infty$-norm $\epsilon$-neighborhood of the original image $x$.

\textbf{BIM.} Both BIM and PGD attacks are iterative attacks. Different from PGD attacks, BIM updates the adversarial samples starting from the original image.

\textbf{CW.} CW attack is different from previous gradient-based attack methods. It is based on model optimization to generate adversarial samples. Its optimization function is as follows:
\begin{equation}
    \text{minimize} \frac{1}{2} \lVert \tanh(W) + 1 - x \rVert^2_2 + c \cdot f(\tanh(W) + 1),
\end{equation}
where $f$ is defined as
\begin{equation}
    f(\hat{x}) = \max(\max\{Z(\hat{x})_i : i \neq j\} - Z(\hat{x})_j, 0),
\end{equation}
where $c$ is a parameter to control the perturbation, $Z(\cdot)$ represents the logits output on label $y_i$.

Unlike in ANNs, the input data $x$ is first encoded into sequences $\boldsymbol{O}^0$ over time windows $T$ in SNNs. The most common encoding method involves duplicating $x$ across the temporal dimension~\cite{kim2022rate}, enabling the gradient w.r.t $x$ to be derived from gradient w.r.t $\boldsymbol{O}^0$ as follows:
\begin{equation}
    \nabla_x \mathcal{L}(f(x), y_{\text{true}})=\frac{1}{T}\sum_{t}\frac{\partial\mathcal{L}}{\partial\boldsymbol{O}^0\left(t\right)}.
\end{equation}

\textbf{Backward Pass Through Time.} The most commonly used differentiable approximation is the Backward Pass Through Time (BPTT) with surrogate gradients. In this method, the non-differentiable neuron fire function is replaced by a differentiable function ($h\left(\boldsymbol{\overline{U}}^{l}(t)\right)$) on the backward pass. The backward pass can be described as:
\begin{align}
    \frac{\partial\mathcal{L}}{\partial\boldsymbol{U}^l(t)}
     & =\frac{\partial\mathcal{L}}{\partial\boldsymbol{O}^{l}(t)}\frac{\partial\boldsymbol{O}^{l}(t)}{\partial\boldsymbol{U}^{l}(t)}
    +\frac{\partial\mathcal{L}}{\partial\boldsymbol{U}^l(t+1)}\frac{\partial\boldsymbol{U}^l(t+1)}{\partial\boldsymbol{U}^l(t)}.     \\
     & \approx \frac{\partial\mathcal{L}}{\partial\boldsymbol{O}^{l}(t)} h\left(\boldsymbol{\overline{U}}^{l}(t)\right)
    +\frac{\partial\mathcal{L}}{\partial\boldsymbol{U}^l(t+1)}\frac{\partial\boldsymbol{U}^l(t+1)}{\partial\boldsymbol{U}^l(t)}.
\end{align}

\textbf{Backward Pass Through Rate.} Another differentiable approximation is the Backward Pass Through Rate (BPTR). In this method, the backward pass takes the derivative directly from the average firing rate of the spiking neurons between layers. We consider neurons at each time-step equivalent, and the derivative is determined by the average firing rate.
\begin{align}
    \frac{\partial L}{\partial O^l(t)} = \frac{\partial L}{\partial O^{l+1}(t)} \frac{\partial \frac{1}{T} \sum_{i=0}^{T} O^{l+1}(i)}{\partial \frac{1}{T} \sum_{i=0}^{T} O^l(i)}.
\end{align}
Since the relationship of the firing rates in adjacent layers for non-leaky IF neuron is nearly linear, the gradients of $\frac{\partial \frac{1}{T} \sum_{i=0}^{T} O^{l+1}(i)}{\partial \frac{1}{T} \sum_{i=0}^{T} O^l(i)}$ can be approximated using the straight-through estimator, here we use the constant $\frac{1}{T}$ to approximate the value of $\frac{\partial \frac{1}{T} \sum_{i=0}^{T} O^{l+1}(i)}{\partial \frac{1}{T} \sum_{i=0}^{T} O^l(i)}$ The backward pass of the complete neuronal dynamic is approximated by one single function, and the gradient will not accumulate through time-steps. As the forward pass still follows the rule of the spiking neurons, the obtained gradients will be more accurate than the conversion-based attack.

\section{More experimental results}
\subsection{White-box attack with various timestamps $T$}
In Table~\ref{tb:wb} and Table~\ref{tb:wb_at}, we compare the classification accuracy of MPD-SGR with the vanilla method (REG/RAT~\cite{ding2022snn}) under various white-box attacks. We can see that in different datasets (CIFAR-10 and CIFAR-100), networks (VGG11 and WRN16) and time steps ($T$ = 2, 4, 8), MPD-SGR can effectively and stably improve the robustness of the vanilla model to different attacks. This further verify the effectiveness of our method.

\subsection{Performance under different $\epsilon$ and iterative step $k$}
We evaluate model accuracy degradation under increasing PGD attack intensity $\epsilon$ and iterative step $k$. The results in Figure~\ref{fg:wb-plus} show that MPD-SGR achieves a slower accuracy decline than the baseline under stronger PGD attacks for both VGG and WRN architectures, regardless of AT. This demonstrates that our method exhibits strong tolerance to more intense adversarial perturbations.

\section{Analysis of Gradient Obfuscation}
The main concern of the gradient obfuscation lies in the inaccurate of updating gradients. In particular, the performance of the two differentiable approximations was checked against the five tests that can identify gradient obfuscation as done in \cite{kundu2021hire}. Our analysis is mainly based on the quantification results in Tab. 1 and Fig. 3 in the main text.

As shown in Tab. 1 in the main text, for all the trials in the main text, the performance of single-step FGSM is worse than its iterative counterpart PGD, which certifies the success of BPTT and BPTR approximation in terms of Test(1) in Table \ref{tb:checklist}. To verify Test(2), we conduct black-box attacks on the proposed models and the vanilla ones in Fig. 3 in the main text. The black-box perturbation performs weaker than white-box perturbation, so Test(2) is satisfied. To verify Test(3)(4) we analyze VGG-11 on CIFAR-10 with increasing attack bound. In Figure.\ref{fg:unbound_attack}, the classification accuracy decreases as we increase $\epsilon$ and finally reaches an accuracy of random guessing. As suggested in \cite{kundu2021hire}, Test(5) "can fail only if gradient-based attacks cannot provide adversarial examples for the model to misclassify". To sum up, we found no gradient obfuscation for the BPTT and BPTR approximation, which are suitable for adversarial training and testing.


\bibliography{19424}